\newtheorem{myTheo}{Theorem}
\newtheorem{myPro}{Problem}
\newtheorem{myAss}{Assumption}
\begin{document}
%
\title{Debiased Graph Neural Networks with\\Agnostic Label Selection Bias}
%
%
%

\author{Shaohua Fan,
Xiao Wang,~\IEEEmembership{Member,~IEEE,} \IEEEauthorblockN{Chuan Shi\thanks{\IEEEauthorrefmark{2} Corresponding author.}\IEEEauthorrefmark{2}},~\IEEEmembership{Member,~IEEE,}
        Kun Kuang,
        Nian Liu, Bai Wang

\thanks{S. Fan, N. Liu and B. Wang are with the Department
of Computer Science, Beijing University of Posts and Telecommunications,
Beijing, China.
E-mail: \{fanshaohua, nianliu, wangbai\}@bupt.edu.cn}
\thanks{X. Wang and C. Shi are with the Key Laboratory of Trustworthy Distributed Computing and Service (MoE), Beijing University of Posts and Telecommunications, Beijing, China and the Peng Cheng Laboratory, Shenzhen, China.
E-mail: \{xiaowang,shichuan\}@bupt.edu.cn}
\thanks{K. Kuang is with the College of Computer Science and Technology,  Zhejiang University, Zhejiang, China.
E-mail: kunkuang@zju.edu.cn}
\thanks{Manuscript received 05 Feb. 2021; revised 02 Sep. 2021; accepted 29 Dec 2021.}}

%
%

\markboth{Journal of \LaTeX\ Class Files,~Vol.~14, No.~8, August~2015}%
{IEEE TRANSACTIONS ON NEURAL NETWORKS AND LEARNING SYSTEMS}
%
%



\maketitle

\begin{abstract}
Most existing Graph Neural Networks (GNNs) are proposed without considering the selection bias in data, i.e., the inconsistent distribution between the training set with test set. In reality, the test data is not even available during the training process, making selection bias agnostic. Training GNNs with biased selected nodes leads to significant parameter estimation bias and greatly impacts the generalization ability on test nodes. In this paper, we first present an experimental investigation, which clearly shows that the selection bias drastically hinders the generalization ability of GNNs, and theoretically prove that the selection bias will cause the biased estimation on GNN parameters. Then to remove the bias in GNN estimation, we propose a novel Debiased Graph Neural Networks (DGNN) with a differentiated decorrelation regularizer. The differentiated decorrelation regularizer estimates a sample weight for each labeled node such that the spurious correlation of learned embeddings could be eliminated. We analyze the regularizer in causal view and it motivates us to differentiate the weights of the variables based on their contribution on the confounding bias. Then, these sample weights are used for reweighting GNNs to eliminate the estimation bias, thus help to improve the stability of prediction on unknown test nodes. Comprehensive experiments are conducted on several challenging graph datasets with two kinds of label selection biases. The results well verify that our proposed model outperforms the state-of-the-art methods and DGNN is a flexible framework to enhance existing GNNs.

\end{abstract}

\begin{IEEEkeywords}
Graph Neural Networks, Casual Inference, Selection Bias.
\end{IEEEkeywords}

%
\IEEEpeerreviewmaketitle

\section{Introduction}

\IEEEPARstart{G}raph Neural Networks (GNNs) are powerful deep learning algorithms on graphs with various applications~\cite{scarselli2008graph,kipf2016semi,velivckovic2017graph,hamilton2017inductive}. Existing GNNs mainly learn a node embedding through aggregating the features from its neighbors, and such message-passing framework is supervised by the node label in an end-to-end manner. During this training procedure, GNNs will effectively learn the correlation between the structure patterns and node features with the node labels, so that GNNs are capable of learning the embeddings of new nodes and inferring their labels.

\par One basic requirement of GNNs making precise predictions on unseen test nodes is that the distribution of labeled training nodes and test nodes is the same, i.e., the structure and feature of labeled training and test nodes follow the similar pattern, so that the learned correlation between the current graph and labels can be well generalized to the new nodes. However, in reality, there are two inevitable issues. (1) Because it is difficult to control the graph collection in an unbiased manner, the relationship between the collected real-world graph and the labeled nodes is inevitably biased. Training on such a graph will cause biased correlation with node labels. Taking a scientist collaboration network as an example, if most scientists with ``machine learning'' (ML) label collaborate with those with ``computer vision'' (CV) label, existing GNNs may learn spurious correlation, i.e., scientists who cooperate with CV scientists are ML scientists. If a new ML scientist only connects with ML scientists or the scientists in other areas, it will be probably misclassified. (2) The test nodes in the real scenario are usually not available in the training phase, implying that the distribution of new nodes is agnostic. Once the distribution is inconsistent with that in the training nodes, the performance of all the current GNNs will be hindered. Even transfer learning is able to solve the distribution shift problem, however, it still needs the prior of test distribution, which actually cannot be obtained beforehand. Therefore, the agnostic label selection bias greatly affects the generalization ability of GNNs on unknown test data.

\par In order to observe selection bias in real graph data, we conduct an experimental investigation to validate the effect of selection bias on GNNs (see Section~\ref{sec::Experimental Investigation}). We select training nodes with different biased degrees for each dataset, making the distribution of training nodes and test nodes inconsistent. The results clearly show that selection bias drastically hinders the performance of GNNs on unseen test nodes. Moreover, with heavier bias, the performance drops more. Further, we theoretically analyze how the data selection bias results in the estimation bias in GNN parameters (see Section~\ref{sec::Theoretical}). Based on the stable learning technique~\cite{kuang2020stable}, we can assume that the learned embeddings consist of two parts: stable variables and unstable variables. The data selection bias will cause spurious correlation between these two kinds of variables. Thereby we prove that with the inevitable model misspecification, the spurious correlation will further cause the parameter estimation bias. Once the weakness of the current GNNs with selection bias is identified, one natural question is ``\textit{how to remove the estimation bias in GNNs?}''

\par In this paper, we propose a novel Debiased Graph Neural Network (DGNN) framework for stable graph learning by jointly optimizing a differentiated decorrelation regularizer and a weighted GNN model. Specifically, the differentiated decorrelation regularizer is able to learn a set of sample weights under differentiated variable weights, so that the spurious correlation between stable and unstable variables would be greatly eliminated. Based on the causal view analysis of the decorrelation regularizer, we theoretically prove that the weights of variables can be differentiated by the regression coefficients. Compared with existing decorrelation methods~\cite{kuang2020stable,shen2020stable}, the proposed regularizer is able to remove the spurious correlation while maintaining a higher effective sample size and requiring less prior knowledge. Moreover, to better combine the decorrelation regularizer with existing GNN architecture, the theoretical result shows that adding the regularizer to the embeddings learned by the penultimate layer could be both theoretically sound and flexible. Then the sample weights learned by the decorrelation regularizer are used to reweight the GNN loss so that the parameter estimation could be unbiased.

\par In summary, the contributions of this paper are three-fold: i) We investigate a new problem of learning GNNs with agnostic label selection bias. The problem setting is general and practical for real applications. ii) We bring the idea of variable decorrelation into GNNs to relieve bias influence on model learning and propose a general framework DGNN that could be adopted to various GNNs. iii) We conduct experiments on real-world graph benchmarks with two kinds of agnostic label selection biases, and the experimental results demonstrate the effectiveness and flexibility of our model.

\section{Effect of Label Selection Bias on GNNs}
\label{sec::effect}

In this section, we first summarize the main notations used in this paper in Table~\ref{Tab:notations} and then formulate our target problem:
\begin{myPro}[\textbf{Semi-supervised Learning on Graph with Agnostic Label Selection Bias}]
Given a training graph $\mathcal{G}_{train} =\{\mathbf{A}_{train}, \mathbf{X}_{train}, \mathbf{Y}_{train}\}$, where $\mathbf{A}_{train}\in\mathbb{R}^{N\times N}$ ($N$ nodes) represents the adjacency matrix, $\mathbf{X}_{train}\in\mathbb{R}^{N\times D}$ ($D$ features) refers to the node feature vectors and $\mathbf{Y}_{train}\in\mathbb{R}^{n\times C}$ ($n$ labeled nodes, $C$ classes) refers to the available labels for training ($n\ll N$), the task is to learn a GNN $g_\theta(\cdot)$ with parameter $\theta$  to precisely predict the label of nodes on test graph $\mathcal{G}_{test} =\{\mathbf{A}_{test}, \mathbf{X}_{test}, \mathbf{Y}_{test}\}$, where distribution $\Psi(\mathcal{G}_{train})\neq\Psi(\mathcal{G}_{test})$.
\end{myPro}

\begin{table}[ht]
\caption{Glossary of Notations.}
\begin{tabular}{r|l}
\hline
\textbf{Notation} & \textbf{Description} \\ \hline
  $\mathcal{G}_{train}$      &  Training graph         \\ 
 $\mathcal{G}_{test}$      & Test graph           \\ 
  $\mathbf{A}_{train/test}$        & The adjacency matrix of $\mathcal{G}_{train}$ or $\mathcal{G}_{test}$          \\
  $\mathbf{X}_{train/test}$        & The node feature vectors of $\mathcal{G}_{train}$ or $\mathcal{G}_{test}$          \\
   $\mathbf{Y}_{train/test}$        & The node label vectors of $\mathcal{G}_{train}$ or $\mathcal{G}_{test}$          \\
    $\mathbf{H}$/$\hat{\mathscr{G}}(\mathbf{X},\mathbf{A};\theta_g)$        & Node embeddings matrix learned by GNNs          \\
  $\mathbf{S}$        & The stable variables in  $\mathbf{H}$         \\
  $\mathbf{V}$        & The unstable variables in  $\mathbf{H}$         \\
  $\bar{\mathbf{S}}$& The latent stable variables to generate $\mathbf{Y}$         \\
  $\bar{\mathbf{V}}$        & The unstable variables to generate label $\mathbf{Y}$         \\
  $\tilde{\beta}_S$ & The linear coefficients for $\mathbf{S}$         \\
  $\tilde{\beta}_V$        & The linear coefficients for $\mathbf{V}$         \\
  $g(\cdot)$ & The non-linear transformation for stable variables $\mathbf{S}$ \\
  $\beta_S$ & The linear coefficients for $\bar{\mathbf{S}}$         \\
  $\beta_V$        & The linear coefficients for $\bar{\mathbf{V}}$         \\
   $T$        & Treatment variable         \\
   $\mathbf{X}$  & Confounders \\
   $\mathbf{w}$ & Sample weights \\
   $\alpha$ & Variable weights in DVD term \\
   $\xi$  & The linear coefficients for confounders $\mathbf{X}$ \\
   $\gamma$  & The linear coefficient for treatment $T$ \\
   $Y_i(t)$ & The potential outcome of sample $i$ with treatment $t$ \\
  \hline
\end{tabular}
\label{Tab:notations}
\end{table}
\subsection{Experimental Investigation}
\label{sec::Experimental Investigation}
\begin{figure*}[t]
\centering
\subfigure[Cora]{
\includegraphics[width=1.8in]{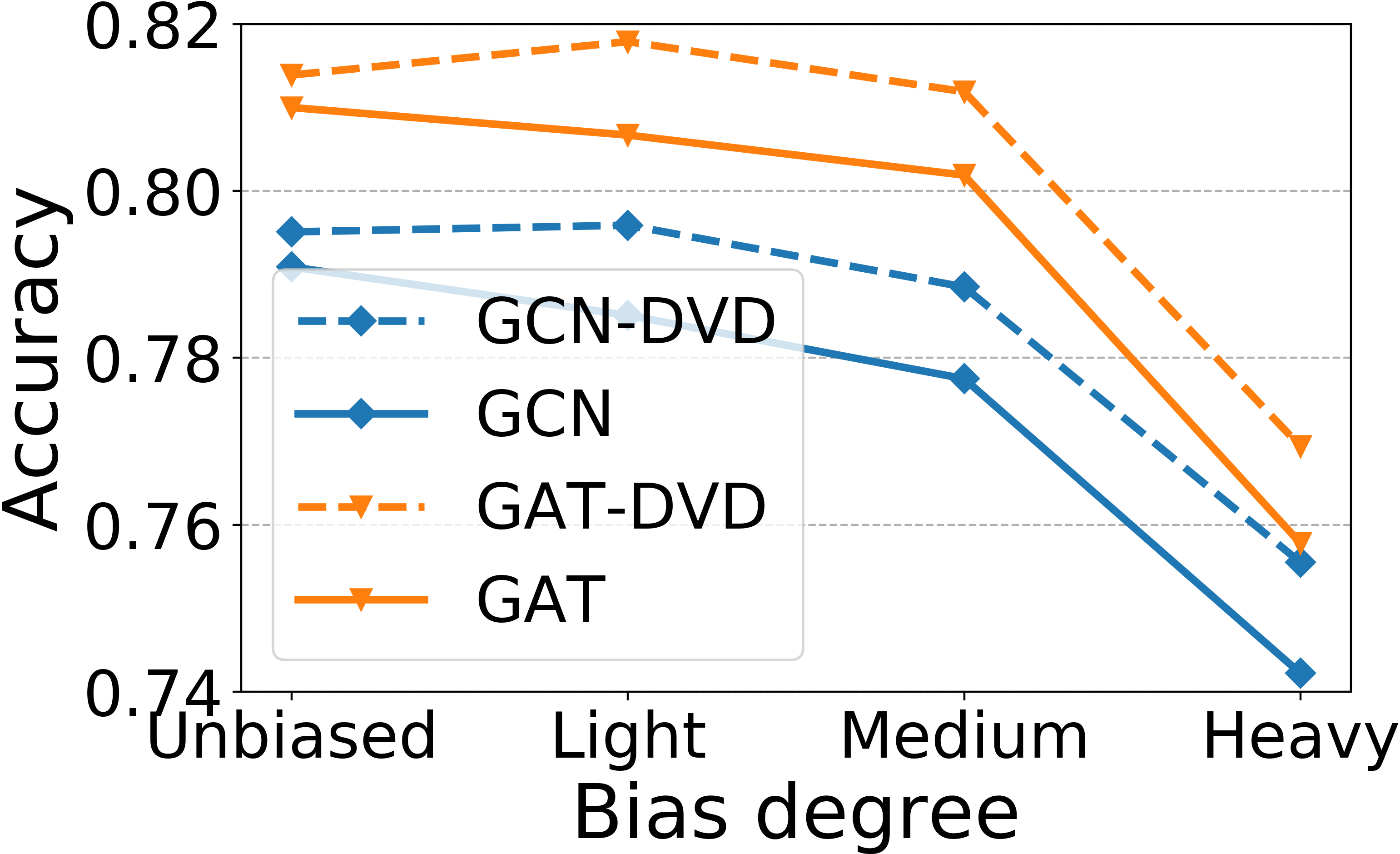}
\label{fig:gradient}
}
\hspace{1pt}
\subfigure[Citeseer]{
\includegraphics[width=1.8in]{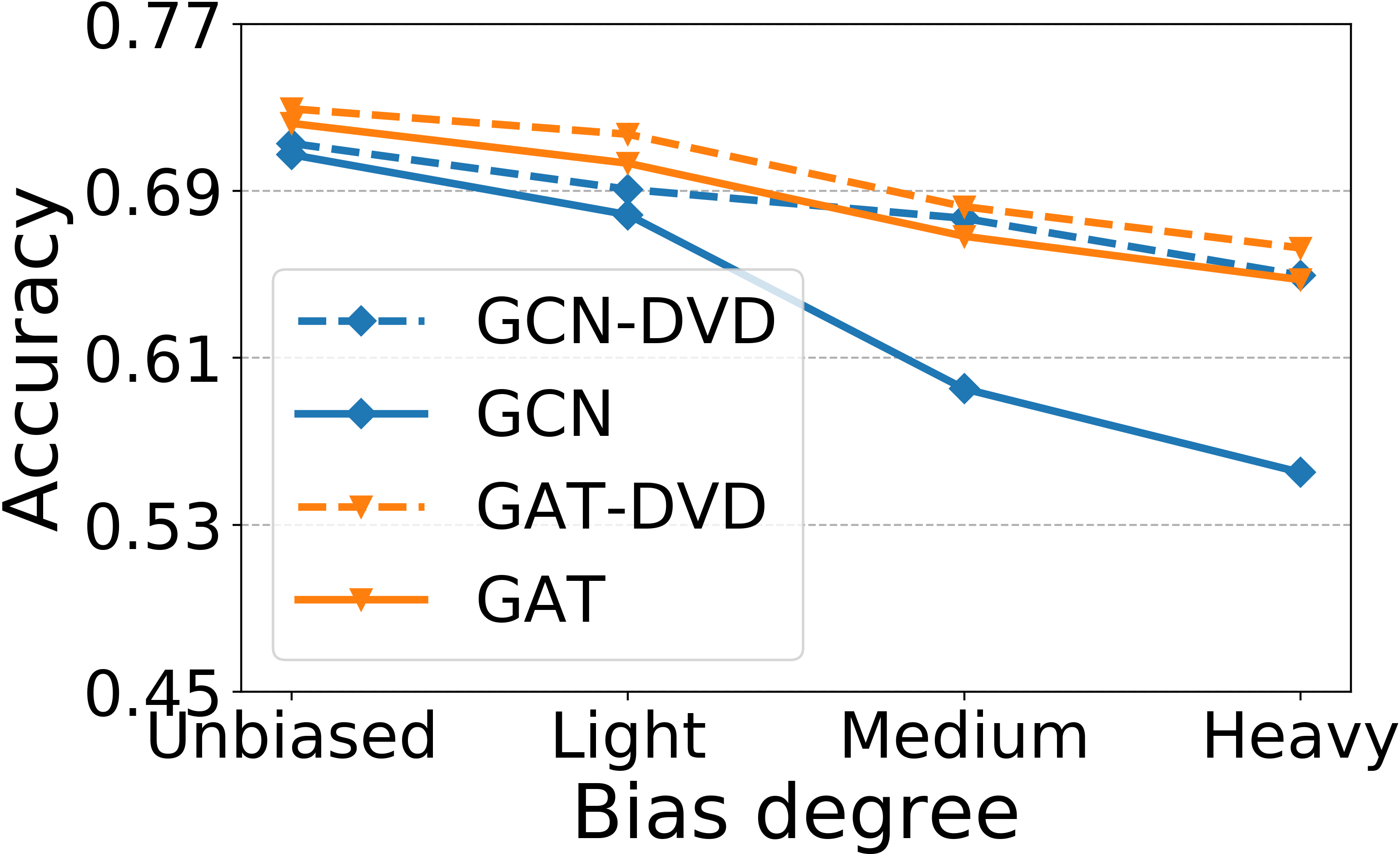}
\label{fig:gradient}
}
\hspace{1pt}
\subfigure[Pubmed]{
\includegraphics[width=1.8in]{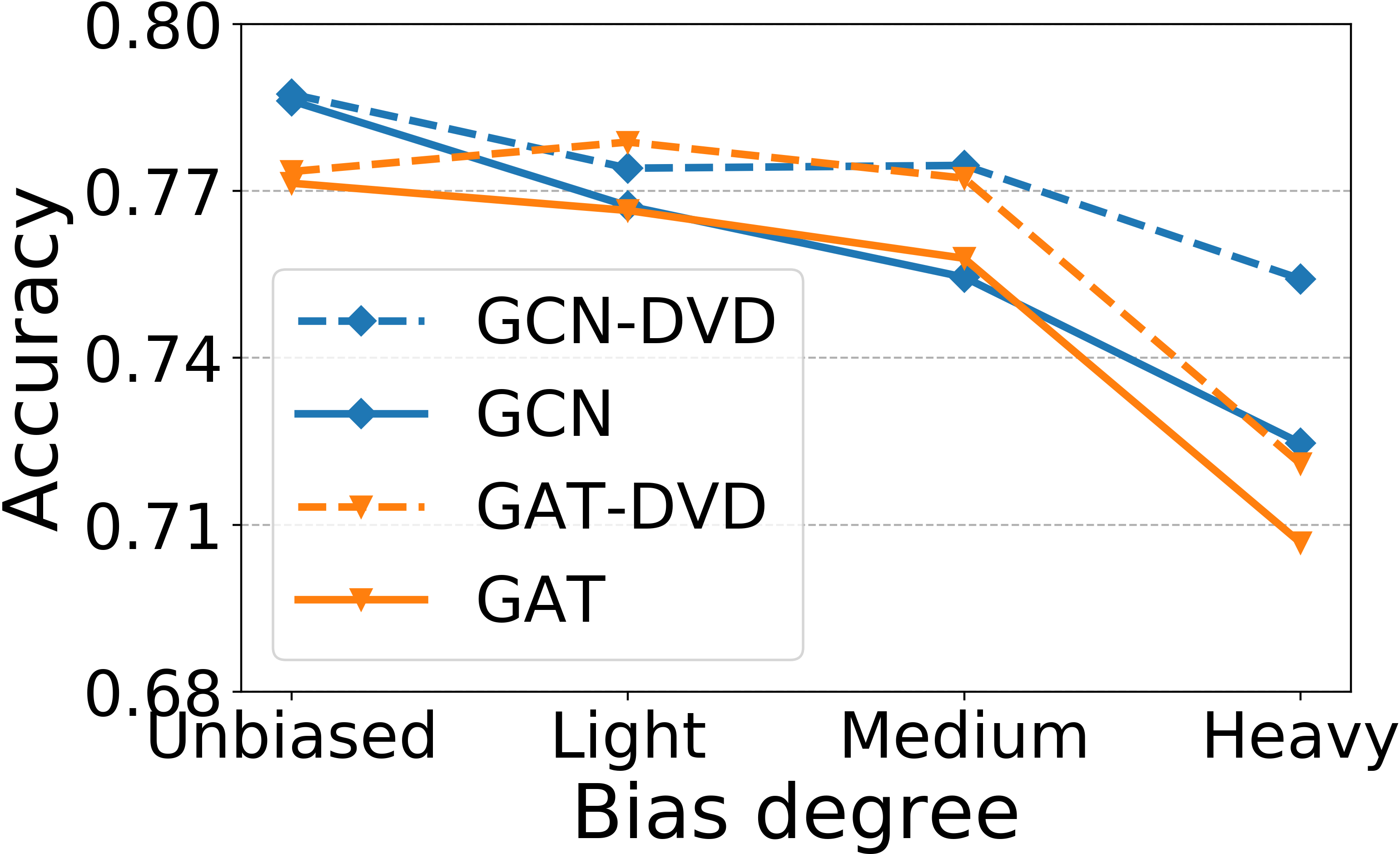}
\label{fig:123}
}
\caption{Effect of selection bias on GCN and GAT.}
\label{fig:preliminary}
\end{figure*}
We conduct an experimental investigation to examine whether the existing GNNs are sensitive to the distribution shifts caused by the label selection bias. One motivating example is that due to the research interests of the researcher that they are more likely to label the interdisciplinary documents that cite more papers from different subjects in a citation network, while testing may be conducted on nodes with any neighborhood distribution. Based on this example, the main idea of the experimental investigation is that we will perform two representative GNNs: GCN~\cite{kipf2016semi} and GAT~\cite{velivckovic2017graph} on three widely used graph datasets: \textit{Cora}, \textit{Citeseer}, \textit{Pubmed}~\cite{sen2008collective} with different bias degrees. If the performance drops sharply comparing with the scenarios without selection bias, this will demonstrate that GNNs cannot generalize well in selection bias settings.

\par To simulate the agnostic selection bias scenario, we first follow the inductive setting in~\cite{pmlr-v97-wu19e} that masks the validation and test nodes as the training graph $\mathcal{G}_{train}$ in the training phase, and then infer the labels of validation and test nodes with whole graph $\mathcal{G}_{test}$. In this way, the distribution of test node can be considered agnostic. Following~\cite{zadrozny2004learning}, we design a biased label selection method on training graph $\mathcal{G}_{train}$.  The selection variable $e$ is introduced to control whether the node will be selected as labeled nodes, where $e=1$ means selected and 0 otherwise. For node $i$, we calculate its neighbor distribution ratio: $r_i=|\{j|j\in\mathcal{N}_i, y_j\neq  y_i\}|/|\mathcal{N}_i|$, where $\mathcal{N}_i$ is neighborhood of node $i$ in $\mathcal{G}_{train}$ and $y_j\neq  y_i$ means the label of central node $i$ is not the same as the label of its neighborhood node $j$. And $r_i$ measures the difference between the label of central node $i$ with the labels of its neighborhood. Then we average all the nodes' $r$ to get a threshold $t$. For each node, the probability to be selected is: $P(e_i=1|r_i)=\left\{
\begin{array}{rcl}
\epsilon       &      {r_i\geq t}\\
1-\epsilon     &      {r_i<t}\\
\end{array} \right. $, where $\epsilon\in (0.5,1)$ is used to control the degree of selection bias and a larger $\epsilon$ means a heavier bias. We set $\epsilon$ as \{0.7, 0.8, 0.9\} to get three bias degrees for each dataset, termed as \textit{Light, Medium, Heavy}, respectively. We select 20 nodes for each class for training and the validation and test nodes are the same as~\cite{yang2016revisiting}. Furthermore, we take the \textit{unbiased} datasets as baselines, where the labeled nodes are selected randomly.

\par Figure~\ref{fig:preliminary} is the results of GCN, GAT and our proposed method, GCN/GAT-DVD, on these datasets with four bias degrees. We can find that: i) Compared with the unbiased scenario, when performing GCN/GAT on biased datasets, they suffer from serious performance decrease, indicating that selection bias greatly affects the GNNs' performance. ii)
All lines decrease monotonically with the increase of bias degree, demonstrating that heavier biases will cause larger performance reduction. iii) GCN/GAT-DVD outperforms the corresponding base models (i.e., GCN/GAT) consistently and achieves larger improvements in heavier bias degree scenarios, indicating that our proposed method could relieve the effect of selection bias.

\subsection{Theoretical Analysis}
\label{sec::Theoretical}
The above experiment empirically verifies the effect of selection bias on GNNs. Here we theoretically analyze the effect of selection bias on estimating the parameters in GNNs. First, because biased labeled nodes have biased neighborhood structure and features, GNNs will encode this biased information
into the node embeddings, which is validated by the experimental investigation. Based on stable learning technique~\cite{kuang2020stable},  we make the following assumption:
\begin{myAss}
The node embeddings learned by GNNs for each node can be decomposed as $\mathbf{H}=\{\mathbf{S},\mathbf{V}\}$, where $\mathbf{S}$ represents the stable variables and $\mathbf{V}$ represents the unstable variables. Specifically, for both training and test environments, $\mathbb{E}(\mathbf{Y}|\mathbf{S}=s,\mathbf{V}=v)=\mathbb{E}(\mathbf{Y}|\mathbf{S}=s)$.
\label{Assumption 1}
\end{myAss}

\par Under Assumption~\ref{Assumption 1}, the distribution shift between training set and test set is mainly induced by the variation in the joint distribution over ($\mathbf{S}$, $\mathbf{V}$), i.e., $\mathbb{P}(\mathbf{S}_{train}, \mathbf{V}_{train})\neq\mathbb{P}(\mathbf{S}_{test}, \mathbf{V}_{test})$. However, there is an invariant relationship between stable variables $\mathbf{S}$ and outcome $\mathbf{Y}$ in both training and test environments, which can be expressed as $\mathbb{P}(\mathbf{Y}_{train}|\mathbf{S}_{train})=\mathbb{P}(\mathbf{Y}_{test}|\mathbf{S}_{test})$. Assumption~\ref{Assumption 1} can be guaranteed by $\mathbf{Y}\bot \mathbf{V}|\mathbf{S}$. Thus, one can solve the stable prediction problem by developing a function  $g(\cdot)$ based on $\mathbf{S}$. However, one can hardly identify such variables in GNNs.
\par Without loss of generality, we take $\mathbf{Y}$ as a continuous variable for analysis and have the following assumption:
\begin{myAss}
The true generation process of target variable $\mathbf{Y}$ contains not only the linear combination of stable variables $\mathbf{S}$, but also the nonlinear transformation of stable variables.
\label{Assumption 2}
\end{myAss}
\par Based on the above assumptions, we formalize the label generation process as follows:
\begin{equation}
    \begin{aligned}
    \mathbf{Y}=f(\mathbf{X}, \mathbf{A}) + \varepsilon &= \bar{\mathbf{S}}\beta_S + \bar{\mathbf{V}}\beta_V + g(\bar{\mathbf{S}}) + \varepsilon,
    \end{aligned}
	\label{equ::data generation}
\end{equation}
where $\bar{\mathbf{S}}$ and $\bar{\mathbf{V}}$ are latent stable variables and unstable variables to generate label $\mathbf{Y}$, which can be learned by GNNs from raw graph data, $\beta_S$ and $\beta_V$ are the corresponding linear coefficients and they represent the effect of each latent variable on outcome $\mathbf{Y}$, $\varepsilon$ is the independent random noise, and $g(\cdot)$ is the nonlinear transformation function of stable variables. According to Assumption~\ref{Assumption 1}, we know that coefficients of unstable variables $\bar{\mathbf{V}}$ are actually $\mathbf{0}$ (i.e., $\beta_V=\mathbf{0}$).

\par  For a classical GNN model with a linear regression predictor, its prediction function can be formulated as:
\begin{equation}
     \hat{\mathbf{Y}} = \hat{\mathscr{G}}(\mathbf{X},\mathbf{A};\theta_g)_S\hat{\beta}_S + \hat{\mathscr{G}}(\mathbf{X},\mathbf{A};\theta_g)_V\hat{\beta}_V + \varepsilon,
    \label{equ::GNN_Y}
\end{equation}
where $\hat{\mathscr{G}}(\mathbf{X},\mathbf{A};\theta_g)\in \mathbb{R}^{N\times p}$  denotes the node embeddings learned by a GNN, such as GCN and GAT, the output variables of $\hat{\mathscr{G}}(\mathbf{X},\mathbf{A};\theta_g)$ can be decomposed as stable variables $
\hat{\mathscr{G}}(\mathbf{X},\mathbf{A};\theta_g)_S\in\mathbf{R}^{N\times m}$ and unstable variables $\hat{\mathscr{G}}(\mathbf{X},\mathbf{A};\theta_g)_V\in\mathbf{R}^{N\times q}$ ($m+q=p$) corresponding to $\bar{\mathbf{S}}$ and $\bar{\mathbf{V}}$ in Eq. (\ref{equ::data generation}).  Compared with Eq. (\ref{equ::data generation}), we can find that the parameters of GNN model could be unbiasedly estimated if the nonlinear term $g(\bar{\mathbf{S}})=0$ (i.e., there does not exist any non-linear relationship in the label generation process that cannot be learned by GNNs), because the GNN model in Eq. (\ref{equ::GNN_Y}) will have the same label generation mechanism as Eq. (\ref{equ::data generation}). However, as common used GNNs only have several layers which may limit their nonlinear power and the real-world graph data is far more complicated, it is reasonable to assume that there is a nonlinear term $g(\bar{\mathbf{S}})\neq 0$ that cannot be fitted by the GNNs. Under this assumption, next, we taking a vanilla GCN~\cite{kipf2016semi} as an example to illustrate how the distribution shift will induce parameter estimation bias. A two-layer GCN can be formulated as $\hat{\mathbf{A}}\sigma(\hat{\mathbf{A}}\mathbf{X}\mathbf{W}^{(0)})\mathbf{W}^{(1)}$, where $\hat{\mathbf{A}}$ is the normalized adjacency matrix, $\mathbf{W}$ is the transformation matrix at each layer and $\sigma(\cdot)$ is the Relu activation function. We decompose GCN as two parts: one is embedding learning part $\hat{\mathbf{A}}\sigma(\hat{\mathbf{A}}\mathbf{X}\mathbf{W}^{(0)})$, which can be decomposed as $[\mathbf{S}^\mathrm{T}, \mathbf{V}^\mathrm{T}]$, corresponding to $\hat{\mathscr{G}}(\mathbf{X},\mathbf{A};\theta_g)_S$ and $\hat{\mathscr{G}}(\mathbf{X},\mathbf{A};\theta_g)_V$ in Eq. (\ref{equ::GNN_Y}), and the other part is $\mathbf{W}^{(1)}$, where the learned parameters can be decomposed as $[\tilde{\beta}_S, \tilde{\beta}_V]$, corresponding to $[\hat{\beta}_S, \hat{\beta}_V]$ in Eq. (\ref{equ::GNN_Y}). We aim at minimizing the least-square loss:
\begin{equation}\label{equ::square loss}
  \mathcal{L}_{GCN}=\sum^n_{i=1}(\mathbf{S}_i^\mathrm{T}\tilde{\beta}_S+\mathbf{V}^\mathrm{T}_i\tilde{\beta}_V-\mathbf{Y}_i)^2.
\end{equation}

\par According to the derivation rule of partitioned regression model~\cite{kuang2020stable,nurhonen1992property}, with $\mathbf{S}=\bar{\mathbf{S}}$ and $\mathbf{V}=\bar{\mathbf{V}}$, we have:
\begin{equation}
\begin{aligned}
\tilde{\beta}_{V}-\beta_V&=(\frac{1}{n}\sum^n_{i=1}\mathbf{V}_i^\mathrm{T}\mathbf{V}_i)^{-1}(\frac{1}{n}\sum^n_{i=1}\mathbf{V}_i^\mathrm{T} g(\mathbf{S}_i)) \\
&+ (\frac{1}{n}\sum^n_{i=1}\mathbf{V}_i^\mathrm{T}\mathbf{V}_i)^{-1}(\frac{1}{n}\sum^n_{i=1}\mathbf{V}_i^\mathrm{T} \mathbf{S}_i) (\beta_S-\tilde{\beta}_{S}),
  \label{equ:V}
\end{aligned}
\end{equation}

\begin{equation}
\begin{aligned}
 \tilde{\beta}_{S}-\beta_S&=(\frac{1}{n}\sum^n_{i=1}\mathbf{S}_i^\mathrm{T}\mathbf{S}_i)^{-1}(\frac{1}{n}\sum^n_{i=1}\mathbf{S}_i^\mathrm{T} g(\mathbf{S}_i)) \\
 &+ (\frac{1}{n}\sum^n_{i=1}\mathbf{S}_i^\mathrm{T}\mathbf{S}_i)^{-1}(\frac{1}{n}\sum^n_{i=1}\mathbf{S}_i^\mathrm{T} \mathbf{V}_i) (\beta_V-\tilde{\beta}_{V}),
 \label{equ:S}
\end{aligned}
\end{equation}
where $n$ is labeled node size, $\mathbf{S}_i$ is $i$-th sample of $\mathbf{S}$, $\frac{1}{n}\sum^n_{i=1}\mathbf{V}_i^\mathrm{T} g(\mathbf{S}_i)=\mathbb{E}(\mathbf{V}^\mathrm{T} g(\mathbf{S}))+o_p(1)$, $\frac{1}{n}\sum^n_{i=1}\mathbf{V}_i^\mathrm{T} \mathbf{S}_i=\mathbb{E}(\mathbf{V}^\mathrm{T}\mathbf{S})+o_p(1)$ and $o_p(1)$ is the error which is negligible. Ideally, $\tilde{\beta}_{V}-\beta_V=0$ indicates that there is no bias between the estimated and the real parameter. However, if $\mathbb{E}(\mathbf{V}^\mathrm{T} \mathbf{S})\neq 0$ or $\mathbb{E}(\mathbf{V}^\mathrm{T} g(\mathbf{S}))\neq 0$ in Eq.~(\ref{equ:V}), $\tilde{\beta}_{V}$ will be biased, leading to the biased estimation on $\tilde{\beta}_{S}$ in Eq.~(\ref{equ:S}) as well, i.e, the true effect of learned embeddings on label $\mathbf{Y}$ can not be estimated precisely. Since the correlation between $\mathbf{V}$ and $\mathbf{S}$ (or $g(\mathbf{S})$)~\footnote{We assume all variables are centered with zero mean. This assumption could be satisfied by adding a normalization layer after the learned embeddings.} might shift in test phase, the biased parameters learned in training set is not the optimal parameters for predicting testing nodes. Therefore, to increase the stability of prediction, we need to unbiasedly estimate the parameters of $\tilde{\beta}_V$ by removing the correlation between $\mathbf{V}$ and $\mathbf{S}$ (or $g(\mathbf{S})$) on training graph, making $\mathbb{E}(\mathbf{V}^\mathrm{T} \mathbf{S})= 0$ and $\mathbb{E}(\mathbf{V}^\mathrm{T} g(\mathbf{S}))= 0$. Note that $\frac{1}{n}\sum^n_{i=1}\mathbf{S}_i^\mathrm{T} g(\mathbf{S}_i)$ in Eq.~(\ref{equ:S}) can also cause estimation bias, but the relation between $\mathbf{S}$ and $g(\mathbf{S})$ is stable across environments, which do not affect the stability to some extent.

\begin{figure*}[t]
  \centering
  \includegraphics[width=17cm]{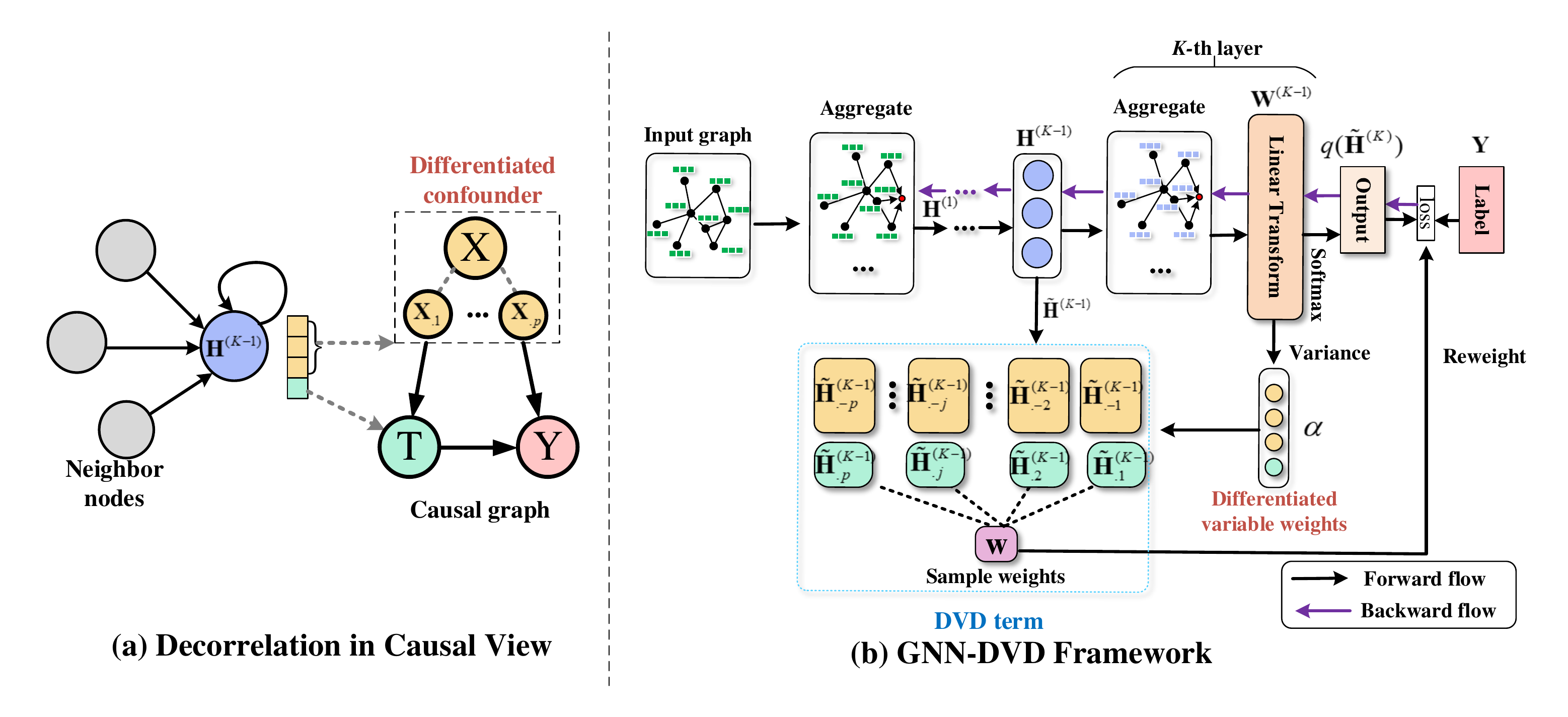}
  \caption{(a) Diagram of decorrelating node embeddings with confounding balance. $\tilde{\mathbf{H}}^{(K-1)}$ is the node embedding matrix to be decorrelated. $T$ is the treatment variable, corresponding to one target variable in $\tilde{\mathbf{H}}^{(K-1)}$. $\mathbf{X}$ means the confounders, corresponding to the remaining variables of the target variable in $\tilde{\mathbf{H}}^{(K-1)}$. $Y$ is the outcome, corresponding to labels.  (b) The framework of GNN-DVD. The same color in the two figures represents the same kind of variable. }
  \label{fig::Model}
\end{figure*}
\section{Proposed Model}

\subsection{Revisiting on Variable Decorrelation in Causal View}

\label{sec::confounder balanceing}
\par To decorrelate $\mathbf{V}$ and $\mathbf{S}$ (or $g(\mathbf{S})$), we should decorrelate the output variables of $\hat{\mathscr{G}}(\mathbf{X},\mathbf{A};\theta_g)$. \cite{kuang2020stable} proposes a Variable Decorrelation (VD) term with sample reweighting technique to eliminate the correlation between each variable pair, in which the sample weights are learned by jointly minimizing the moment discrepancy between each variable pair:
\begin{equation}
   \mathcal{L}_{VD}(\mathbf{H})=\sum^p_{j=1}||\mathbf{H}_{.j}^\mathrm{T}\Lambda_\mathbf{w}\mathbf{H}_{.-j}/n-\mathbf{H}_{.j}^\mathrm{T} \mathbf{w}/n\cdot\mathbf{H}_{.-j}^\mathrm{T} \mathbf{w}/n||^2_2,
	\label{equ:L_VD}
\end{equation}
where $\mathbf{H}\in\mathbb{R}^{n\times p}$ means the variables needed to be decorrelated, i.e., $\hat{\mathscr{G}}(\mathbf{X},\mathbf{A};\theta_g)$ of GNNs, $\mathbf{H}_{.j}$ is $j$-th variable of $\mathbf{H}$, $\mathbf{H}_{.-j}=\mathbf{H}\backslash{\mathbf{H}_{.j}}$ means all the remaining variables by setting the value of $j$-th variable in $\mathbf{H}$ as zero, $\mathbf{w}\in\mathbb{R}^{n\times 1}$ is the sample weights, $\sum_{i=1}^n \mathbf{w}_i=n$ and $\Lambda_\mathbf{w}=\text{diag}(\mathbf{w}_1,\cdots,\mathbf{w}_n)$ is the corresponding diagonal matrix. As we can see, $\mathcal{L}_{VD}(\mathbf{H})$ can be reformulated as $\sum_{j\neq k}||\mathbf{H}_{.j}^\mathrm{T}\Lambda_\mathbf{w}\mathbf{H}_{.k}/n-\mathbf{H}_{.j}^\mathrm{T} \mathbf{w}/n\cdot\mathbf{H}_{.k}^\mathrm{T} \mathbf{w}/n||^2_2$, and it aims to let $\mathbb{E}(\mathbf{H}_{.j}^\mathrm{T} \mathbf{H}_{.k})= \mathbb{E}(\mathbf{H}_{.j}^\mathrm{T})\mathbb{E}(\mathbf{H}_{.k})$ for each variable pair $j$ and $k$. $\mathcal{L}_{VD}(\mathbf{H})$ decorrelates all the variable pairs equally. However, decorrelating all the variables requires sufficient samples~\cite{kuang2020stable}, i.e., $n\rightarrow\infty$, which is hard to be satisfied, especially in the semi-supervised setting. In this scenario, we cannot guarantee $\mathcal{L}_{VD}(\mathbf{H})=0$.  Therefore, the key challenge is the difficulties of removing the spurious correlation that has the largest impact on the unbiased estimation when $\mathcal{L}_{VD}(\mathbf{H})\neq0$.

\par Inspired by confounding balancing technique in observational studies~\cite{hainmueller2012entropy}, we revisit the VD regularizer in causal view and show how to differentiate each variable pair. Confounding balancing techniques are often used for causal effect estimation of treatment $T$, where the distributions of
confounders $\mathbf{X}$ are different between treated ($T=1$) and control ($T=0$) groups because
of non-random treatment assignment. One could balance the distribution of confounders between treatment and control groups to unbiasedly estimate causal treatment effects~\cite{kuang2020causal,kuang2018stable,kuang2020data}.  Most balancing approaches exploit moments to characterize
distributions, and balance them by adjusting sample weights $\mathbf{w}$ as follows: $\mathbf{w} = \arg\min\limits_{\mathbf{w}}||\sum_{i:T_i=1}\mathbf{X}_{i}-\sum_{i:T_i=0}\mathbf{w}_i\cdot \mathbf{X}_{i}||^2_2.$ After balancing, the treatment $T$ and the confounders $\mathbf{X}$ tend to be independent.
\par Given one targeted variable $j$, its decorrelation term, $\mathcal{L}_{VD_j}=||\mathbf{H}_{.j}^\mathrm{T}\Lambda_\mathbf{w}\mathbf{H}_{.-j}/n-\mathbf{H}_{.j}^\mathrm{T} \mathbf{w}/n\cdot\mathbf{H}_{.-j}^\mathrm{T} \mathbf{w}/n||^2_2$, is to make $\mathbf{H}_{.j}$ independent from $\mathbf{H}_{.-j}$ \footnote{Nonlinear relationships between variables can be incorporated by considering high-order moments in Eq. (\ref{equ:L_VD}), for example, a polynomial augmented function $f(\mathbf{H})=(\mathbf{H}, \mathbf{H}^2, \mathbf{H}_{.i}\mathbf{H}_{.j}, \mathbf{H}^3, \mathbf{H}_{.i}\mathbf{H}_{.j}\mathbf{H}_{.k}, \cdots)$.}, which is same as the confounding balancing term making treatment and confounders independent. Thereby, $\mathcal{L}_{VD_j}$ can also be viewed as a confounding balancing term, where $\mathbf{H}_{.j}$ is treatment and $\mathbf{H}_{.-j}$ is confounders, illustrated in Fig.~\ref{fig::Model}(a). Hence, our target can be explained as unbiased estimation of causal effect of each variable which is invariant across training and test set. As different variable may contribute unequally to the confounding bias, it is necessary to differentiate the confounders. The target of differentiating confounders exactly matches our target that removing the correlation of variables that has the largest impact on the unbiased estimation.

\subsection{Differetiated Variable Decorrelation}
\par Considering a continuous treatment, the causal effect of treatment can be measured by Marginal Treatment Effect Function (MTEF)~\cite{kreif2015evaluation}, and defined as: $MTEF=\frac{\mathbb{E}[Y_i(t)]-\mathbb{E}[Y_i(t-\Delta t)]}{\Delta t}$, where $Y_i(t)$ represents the potential outcome of sample $i$ with treatment status $T=t$, and $\Delta t$ denotes the increasing level of treatment. With the sample weights $\mathbf{w}$ decorrelating the treatment and the confounders, we can estimate the MTEF by:
\begin{equation}\label{hat_MTEF}
  \widehat{MTEF}=\frac{\frac{1}{n_i}\sum_{i:T_i=t}\mathbf{w}_i\cdot Y_i(T_i)-
  \frac{1}{n_j}\sum_{j:T_j=t-\Delta t}\mathbf{w}_j\cdot Y_j(T_j)}{\Delta t},
\end{equation}
where $n_i$ and $n_j$ are the number of samples for two groups, respectively.
Next, we theoretically analyze how to differentiate confounders' weights with the following theorem.
\begin{myTheo}
In observational studies, different confounders make unequal confounding bias on Marginal Treatment Effect Function (MTEF) with their own weights, and the weights can be learned via regressing outcome $Y$ on confounders $\mathbf{X}$ and treatment variable $T$.
\label{Theorem1}
\end{myTheo}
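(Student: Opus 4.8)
The plan is to make the claim precise by positing a linear structural model for the outcome in terms of the treatment and confounders, then computing how an omitted confounder (or an imperfectly balanced one) biases the weighted estimator of the marginal treatment effect, and reading off the "weight'' of each confounder from the regression coefficients. Concretely, I would start from a generative model of the form $Y = \gamma T + \mathbf{X}\xi + \delta$ with $\delta$ independent noise, matching the notation $\gamma$ for the treatment coefficient and $\xi$ for the confounder coefficients introduced in Table~\ref{Tab:notations}. Here $\gamma$ is exactly the true MTEF, since under this model $\mathbb{E}[Y_i(t)] - \mathbb{E}[Y_i(t-\Delta t)] = \gamma\,\Delta t$ plus terms depending on the conditional distribution of $\mathbf{X}$ given $T$.

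First I would plug the structural model into the estimator $\widehat{MTEF}$ of Eq.~(\ref{hat_MTEF}) and take expectations, obtaining $\widehat{MTEF} = \gamma + \frac{1}{\Delta t}\big(\bar{\mathbf{X}}^{(1)}_{\mathbf{w}} - \bar{\mathbf{X}}^{(0)}_{\mathbf{w}}\big)\xi + o_p(1)$, where $\bar{\mathbf{X}}^{(1)}_{\mathbf{w}}, \bar{\mathbf{X}}^{(0)}_{\mathbf{w}}$ are the weighted confounder means in the two treatment groups. This expresses the confounding bias as a bilinear form: the $j$-th confounder contributes to the bias a term proportional to its residual imbalance $\big(\bar{X}^{(1)}_{j,\mathbf{w}} - \bar{X}^{(0)}_{j,\mathbf{w}}\big)$ multiplied by its regression coefficient $\xi_j$. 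This is the crux: it shows the per-confounder contribution to the bias in $\widehat{MTEF}$ is exactly scaled by $\xi_j$, so when $\mathcal{L}_{VD}$ (equivalently the balancing loss) cannot be driven to zero and some residual imbalance remains, the balancing effort should be prioritized on confounders with larger $|\xi_j|$. I would then note $\xi$ (and $\gamma$) are precisely what one recovers by regressing $Y$ on $(\mathbf{X}, T)$, which is the identifiability statement in the theorem.

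Next I would connect this back to the decorrelation setting of Section~\ref{sec::confounder balanceing}: since each variable $\mathbf{H}_{.j}$ plays the role of a treatment in its own balancing subproblem $\mathcal{L}_{VD_j}$, the appropriate weight for confounder $\mathbf{H}_{.k}$ in that subproblem is its regression coefficient on $\mathbf{Y}$, and aggregating over all $j$ motivates weighting variable $k$ in the decorrelation regularizer by (a function of) $|\hat{\beta}_k|$ from regressing the label on the embeddings. I would keep the argument at the level of a first-order/asymptotic analysis: treat sample averages as population moments up to $o_p(1)$ (as already done around Eqs.~(\ref{equ:V})--(\ref{equ:S})), and invoke the partitioned-regression identity to justify that the regression of $Y$ on $(\mathbf{X}, T)$ isolates $\xi$.

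The main obstacle I anticipate is rigor in the continuous-treatment case: the estimator in Eq.~(\ref{hat_MTEF}) compares two treatment "levels'' $t$ and $t-\Delta t$, so I need to be careful about how the weighted confounder means behave as $\Delta t \to 0$ and whether the bias expression stays well defined (it should, because the $\xi_j$ factor is independent of $\Delta t$ and only the imbalance term is differenced). A secondary subtlety is that the theorem implicitly assumes the linear-in-confounders structural form; I would either state this as a working assumption (consistent with Assumption~\ref{Assumption 2}, which already allows a separate nonlinear term in the stable variables) or remark that nonlinear confounding can be absorbed by augmenting $\mathbf{X}$ with higher-order terms, exactly as the footnote after Eq.~(\ref{equ:L_VD}) suggests. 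Everything else is routine bookkeeping with the partitioned-regression formula.
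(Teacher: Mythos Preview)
Your proposal is correct and follows essentially the same route as the paper: plug the structural outcome model into the weighted estimator $\widehat{MTEF}$ of Eq.~(\ref{hat_MTEF}), separate out the true $MTEF$, and read off the per-confounder bias contribution as the residual imbalance times the regression coefficient $\xi_k$, concluding that $\xi$ (and $\gamma$) are obtained by regressing $Y$ on $(\mathbf{X},T)$. The only notable difference is that the paper includes the nonlinear term $g_t(\mathbf{S})$ in the structural model from the outset (Eq.~(\ref{equ::METF_Y})), so its bias decomposition in Eq.~(\ref{equ:differentiate_MTEF}) carries an extra, unidentifiable term that it explicitly sets aside as unreducible; you defer this to a final remark, but since you already flag it and propose exactly the right fix (state it as a working assumption consistent with Assumption~\ref{Assumption 2}, or absorb it via the polynomial augmentation of the footnote), the gap is cosmetic rather than substantive.
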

\begin{proof}
Recalling the Assumption~\ref{Assumption 1}, we rewrite the label generation process Eq. (\ref{equ::data generation}) under MTEF setting as:
\begin{equation}\label{equ::METF_Y}
  Y(t)=\sum_k \xi_{k}\mathbf{X}_{.k}+\gamma t+g_{t}(\mathbf{S}) + \varepsilon,
\end{equation}
where $\alpha=[\xi, \gamma]$ are the linear coefficients, and $g_{T=t}(\mathbf{S})$ is the output of nonlinear transformation of the stable variables $\mathbf{S}$ when treatment $T$ is $t$. Note that if $T\notin \mathbf{S}$, changing the value of $T$ will not change the value of $g_{t}(\mathbf{S})$, $T\in \mathbf{S}$ otherwise.

Under above formulation, we write estimator of $\widehat{MTEF}$ as:

\begin{equation}
    \begin{aligned}
    &\widehat{MTEF}=\frac{\frac{1}{n_i}\sum_{i:T_i=t}\mathbf{w}_i Y_i(T_i)-\frac{1}{n_j}\sum_{j:T_j=t-\Delta t}\mathbf{w}_j Y_j(T_j)}{\Delta t}\\
    &=\frac{\frac{1}{n_i}\sum_{i:T_i=t}\mathbf{w}_i (\sum_k \xi_{k}\mathbf{X}_{ik}+\gamma t+g_{T=t}(\mathbf{S}_i)+\epsilon)}{\Delta t}\\
    &-\frac{\frac{1}{n_j}\sum_{j:T_j=t-\Delta t}\mathbf{w}_j (\sum_{k}\xi_{k}\mathbf{X}_{jk}+\gamma (t-\Delta t)+g_{t-\Delta t}(\mathbf{S}_i)+\epsilon)}{\Delta t} \\
    &=\frac{\frac{1}{n_i}\sum_{i:T_i=t}\mathbf{w}_i \gamma t-\frac{1}{n_j}\sum_{j:T_j=t-\Delta t}\mathbf{w}_j \gamma (t-\Delta t)}{\Delta t} \\
    &+ \frac{\frac{1}{n_i}\sum_{i:T_i=t}\mathbf{w}_i\sum_k \xi_{k}\mathbf{X}_{ik}-\frac{1}{n_j}\sum_{j:T_j=t-\Delta t}\mathbf{w}_j \sum_k\xi_{k}\mathbf{X}_{ik}}{\Delta t}\\
    & +\frac{\frac{1}{n_i}\sum_{i:T_i=t} \mathbf{w}_i g_{t}(\mathbf{S}_i)-\frac{1}{n_j}\sum_{j:T_j=t-\Delta t}\mathbf{w}_j g_{t-\Delta t}(\mathbf{S}_i)}{\Delta t} + \phi(\epsilon)\\
    &=MTEF\\
    &+\sum_{k\neq t}\xi_k(\frac{\sum_{i:T_i=t}\frac{1}{n_i}\mathbf{w}_i\mathbf{X}_{ik}-\sum_{j:T_j=t-\Delta t}\frac{1}{n_j}\mathbf{w}_j\mathbf{X}_{jk}}{\Delta t})\\
    &+\frac{\frac{1}{n_i}\sum_{i:T_i=t}\mathbf{w}_i g_{t}(\mathbf{S}_i)-\frac{1}{n_j}\sum_{j:T_j=t-\Delta t}\mathbf{w}_j g_{t-\Delta t}(\mathbf{S}_i)}{\Delta t} + \phi(\epsilon),
    \end{aligned}
	\label{equ:differentiate_MTEF}
\end{equation}
where $\frac{\frac{1}{n_i}\sum_{i:T_i=t}\mathbf{w}_i \gamma t-\frac{1}{n_j}\sum_{j:T_j=t-\Delta t}\mathbf{w}_j \gamma (t-\Delta t)}{\Delta t}$ is the ground truth of $MTEF$, $\phi(\varepsilon)$ means the noise term, and $\phi(\varepsilon)\simeq 0$ with Gaussian noise. According to the last equation, to reduce the bias of $\widehat{MTEF}$, we need regulate the term $\sum\nolimits_k\xi_k(\frac{\frac{1}{n_i}\sum_{i:T_i=t}\mathbf{w}_i\mathbf{X}_{ik}-\frac{1}{n_j}\sum_{j:T_j=t-\Delta t}\mathbf{w}_j\mathbf{X}_{jk}}{\Delta t})$ and $\frac{\frac{1}{n_i}\sum_{i:T_i=t}\mathbf{w}_i g_{t}(\mathbf{S}_i)-\frac{1}{n_j}\sum_{j:T_j=t-\Delta t}\mathbf{w}_j g_{t-\Delta t}(\mathbf{S}_i)}{\Delta t}$, where the second term has the unknown term $g_{T}(\mathbf{S}_i)$ so that we can only try to reduce the first term. $\frac{\frac{1}{n_i}\sum_{i:T_i=t}\mathbf{w}_i\mathbf{X}_{ik}-\frac{1}{n_j}\sum_{j:T_j=t-\Delta t}\mathbf{w}_j\mathbf{X}_{jk}}{\Delta t}$ means the difference of the $k$-th confounder between treated and control samples. The parameter $\xi_k$ represents the confounding bias weight of the $k$-th confounder, and it is the coefficient of $\mathbf{X}_{.k}$. Moreover, because our target is to learn the weight of each variable pair, i.e., between treatment and each confounder, we also need to learn the weight $\gamma$ of treatment $T$. Hence, according to Eq. (\ref{equ::METF_Y}), the confounder weights and treatment weight can be learned by regressing observed outcome $Y$ on confounders $\mathbf{X}$ and treatment $T$. 
\end{proof} 

\par Due to the connection between treatment effect estimation with variable decorrelation as analyzed in Section~\ref{sec::confounder balanceing}, we utilize Theorem~\ref{Theorem1} to reweight the variable weight in variable decorrelation term. When apply the Theorem~\ref{Theorem1} to GNNs, the confounders $\mathbf{X}$ should be $\mathbf{H}_{.-j}$ and treatment $T$ is $\mathbf{H}_{.j}$, where the embedding $\mathbf{H}$ is learned by $\hat{\mathscr{G}}(\mathbf{X},\mathbf{A};\theta_g)$ in Eq. (\ref{equ::GNN_Y}). And the variable weights $\alpha$ is equal to the regression coefficients for $\mathbf{H}$, i.e, $\hat{\beta}$ in Eq. (\ref{equ::GNN_Y}). Then the Differentiated Variable Decorrelation (DVD) term can be formulated as follows:
\begin{equation}
    \begin{aligned}
   \min_\mathbf{w}\mathcal{L}_{DVD}(\mathbf{H})=&\sum\nolimits_{j=1}^p(\alpha^\mathrm{T}\cdot \text{abs}(\mathbf{H}_{.j}^\mathrm{T}\Lambda_\mathbf{w}\mathbf{H}_{.-j}/n\\
   &-\mathbf{H}_{.j}^\mathrm{T} \mathbf{w}/n\cdot\mathbf{H}_{.-j}^\mathrm{T} \mathbf{w}/n))^2\\
   &+ \frac{\lambda_1}{n}\sum\nolimits_{i=1}^{n}\mathbf{w}_i^2 + \lambda_2 (\frac{1}{n}\sum\nolimits_{i=1}^n\mathbf{w}_i-1)^2, \\
   &s.t. \mathbf{w}\succeq0
   \end{aligned}
	\label{equ:EXY_all_rewrite_alpha}
\end{equation}
where $\text{abs}(\cdot)$ means the element-wise absolute value operation, preventing positive and negative values from eliminating. Term $\frac{\lambda_1}{n}\sum_{i=1}^{n}\mathbf{w}_i^2$ is added to reduce the variance of sample weights to achieve stability, and the formula $\lambda_2(\frac{1}{n}\sum_{i=1}^n\mathbf{w}_i-1)^2$ avoids all the sample weights to be 0. The term $\mathbf{w}\succeq0$ constrains each sample weight to be non-negative. After variable reweighting, the weighted decorrelation term in Eq. (\ref{equ:EXY_all_rewrite_alpha}) can be rewritten as $\sum_{j\neq k}\alpha_j^2\alpha_k^2||\mathbf{H}_{.j}^\mathrm{T}\Lambda_\mathbf{w}\mathbf{H}_{.k}/n-\mathbf{H}_{.j}^\mathrm{T} \mathbf{w}/n\cdot\mathbf{H}_{.k}^\mathrm{T} \mathbf{w}/n||^2_2$, and the weight for variable pair $j$ and $k$ would be $\alpha_j^2\alpha_k^2 $, hence it considers both the weights of treatment and confounder. Then we derive the uniqueness property of $\mathbf{w}$ as follows:
\begin{myTheo}[Uniqueness]
  If $\lambda_1 n\gg p^2+\lambda_2$, $p^2\gg \max(\lambda_1, \lambda_2)$, $|\mathbf{H}_{i,j}|\leq c$ and $|\alpha_i| \leq c$ for some constant $c$, the solution $\hat{\mathbf{w}}\in\{\mathbf{w}:|\mathbf{w}_i|\leq c\}$ to minimize Eq.~(\ref{equ:EXY_all_rewrite_alpha}) is unique.
\end{myTheo}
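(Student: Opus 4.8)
The plan is to prove uniqueness by showing that the objective of Eq.~(\ref{equ:EXY_all_rewrite_alpha}), regarded as a function of $\mathbf{w}$ on the feasible set $\mathcal{W}=\{\mathbf{w}:\mathbf{w}\succeq0,\ |\mathbf{w}_i|\le c\ \forall i\}$, is \emph{strongly convex} on $\mathcal{W}$. Since $\mathcal{W}$ is a nonempty compact convex set (a box) and the objective is continuous, strong convexity gives both existence and, more importantly, uniqueness of the minimizer: two distinct minimizers $\mathbf{w}^\star\neq\mathbf{w}^{\star\star}$ would have their midpoint in $\mathcal{W}$ with strictly smaller objective value, a contradiction. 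I would decompose $\mathcal{L}_{DVD}=D(\mathbf{w})+R(\mathbf{w})+Q(\mathbf{w})$, where $D$ is the differentiated-decorrelation part, $R(\mathbf{w})=\tfrac{\lambda_1}{n}\|\mathbf{w}\|_2^2$, and $Q(\mathbf{w})=\lambda_2(\tfrac1n\mathbf{1}^{\mathrm T}\mathbf{w}-1)^2$. For $D$ I would work with the polynomial form $D(\mathbf{w})=\sum_{j\neq k}\alpha_j^2\alpha_k^2\,m_{jk}(\mathbf{w})^2$ with $m_{jk}(\mathbf{w})=\tfrac1n\mathbf{H}_{.j}^{\mathrm T}\Lambda_{\mathbf{w}}\mathbf{H}_{.k}-(\tfrac1n\mathbf{H}_{.j}^{\mathrm T}\mathbf{w})(\tfrac1n\mathbf{H}_{.k}^{\mathrm T}\mathbf{w})$; keeping the literal $\text{abs}(\cdot)$ form only forces one to argue first on the open dense set where no $m_{jk}$ vanishes and then extend along line segments by continuity.

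The two regularizers are immediate. $R$ is $\tfrac{2\lambda_1}{n}$-strongly convex since $\nabla^2 R=\tfrac{2\lambda_1}{n}\mathbf{I}$, and $Q$ is convex (a squared affine function) with $\nabla^2 Q=\tfrac{2\lambda_2}{n^2}\mathbf{1}\mathbf{1}^{\mathrm T}\succeq0$, so $Q$ only improves convexity and can be dropped from any lower bound on the Hessian. The substantive step is a uniform bound on $\|\nabla^2 D(\mathbf{w})\|_{\mathrm{op}}$ over $\mathcal{W}$: each $m_{jk}$ is a degree-$2$ polynomial in $\mathbf{w}$ whose value, gradient, and Hessian are controlled by $|\mathbf{H}_{ij}|\le c$ and $|\mathbf{w}_i|\le c$ — every entry of $\tfrac1n\mathbf{H}_{.j}^{\mathrm T}\Lambda_{\mathbf{w}}\mathbf{H}_{.k}$ is an average of $n$ terms of size $O(c^{3})$, its gradient entries are $O(c^{3}/n)$, and $\nabla^2 m_{jk}$ is a rank-$\le2$ matrix of operator norm $O(c^{2}/n)$ — so that, using $|\alpha_i|\le c$, differentiating $D=\sum\alpha_j^2\alpha_k^2 m_{jk}^2$ twice by the product rule and summing the $O(p^2)$ terms yields $\|\nabla^2 D(\mathbf{w})\|_{\mathrm{op}}\le \kappa\,\Pi(p,n,\lambda_1,\lambda_2)$ for a constant $\kappa=\kappa(c)$ and an explicit quantity $\Pi$. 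The role of the hypotheses $\lambda_1 n\gg p^2+\lambda_2$ and $p^2\gg\max(\lambda_1,\lambda_2)$ is precisely to make this bound strictly smaller than the strong-convexity modulus $\tfrac{2\lambda_1}{n}$ of $R$.

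Combining the pieces, on the region where $D$ is twice differentiable one obtains $\nabla^2\mathcal{L}_{DVD}(\mathbf{w})\succeq\tfrac{2\lambda_1}{n}\mathbf{I}-\|\nabla^2 D(\mathbf{w})\|_{\mathrm{op}}\mathbf{I}\succeq\mu\mathbf{I}$ with $\mu>0$ under the stated scaling; since $\mathcal{L}_{DVD}$ is continuous and, in the polynomial form, globally $C^2$, this upgrades to $\mu$-strong convexity on all of $\mathcal{W}$, and hence the minimizer over $\mathcal{W}$ is unique (if one retains the $\text{abs}(\cdot)$ form, one instead checks the strong-convexity inequality along each segment, whose restriction is continuous and piecewise-$C^2$ with second derivative $\ge\mu$ on each of finitely many pieces cut out by the polynomial equations $m_{jk}=0$). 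I expect the main obstacle to be the Hessian bookkeeping: tracking the powers of $n$, $p$, $c$, $\lambda_1$, and $\lambda_2$ through $\nabla^2 D$ carefully enough that the two stated conditions are exactly what forces $\mu>0$, rather than some cleaner but weaker constraint; the secondary subtlety — the non-differentiability of $\text{abs}(\cdot)$ — is most cleanly avoided by the polynomial reformulation of $D$ used above.
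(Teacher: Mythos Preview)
Your approach is essentially the paper's: decompose the objective, observe that the $\ell_2$ regularizer contributes $\tfrac{2\lambda_1}{n}\mathbf{I}$ to the Hessian, bound the Hessian of the decorrelation term $D$ using $|\mathbf{H}_{ij}|\le c$, $|\alpha_i|\le c$, $|\mathbf{w}_i|\le c$, and conclude that the Hessian is positive definite under the scaling $\lambda_1 n\gg p^2+\lambda_2$, giving (strict/strong) convexity and hence uniqueness on the box.

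Two small corrections. First, your bound $\Pi(p,n,\lambda_1,\lambda_2)$ on $\|\nabla^2 D\|$ should not carry $\lambda_1,\lambda_2$: $D$ does not depend on them, and when you carry out the bookkeeping you will get a bound of order $p^2/n^2$ entrywise (equivalently $p^2/n$ in operator norm), matching the paper. Second, the hypothesis $p^2\gg\max(\lambda_1,\lambda_2)$ is \emph{not} used by the paper for uniqueness; it is invoked separately to guarantee that the decorrelation term $D$ dominates the two regularizers (so that the unique minimizer is not simply driven by $R$ and $Q$). Only $\lambda_1 n\gg p^2+\lambda_2$ enters the convexity/uniqueness argument. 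Your treatment of $Q$ as convex and hence droppable from the lower bound is in fact slightly sharper than the paper's, which folds $\lambda_2/n^2\,\mathbf{1}\mathbf{1}^{\mathrm T}$ into the perturbation; and your care about the $\text{abs}(\cdot)$ non-smoothness is a point the paper passes over silently by working directly with the squared polynomial form.
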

\begin{proof}See Appendix~\ref{appendix::Theorem 2}.\end{proof}

\subsection{Debiased GNN Framework}
\label{sec::3.3}

In this section, we describe the framework of Debiased GNN (DGNN) that incorporates VD/DVD term with GNNs in a seamless way. As analyzed in Section~\ref{sec::Theoretical}, decorrelating $\hat{\mathbf{A}}\sigma(\hat{\mathbf{A}}\mathbf{X}\mathbf{W}^{(0)})$ could make parameter estimation of GCN unbiased. However, most GNNs follow a layer-by-layer stacking architecture, and the output embedding of each layer is more easy to obtain in implementing. Since $\hat{\mathbf{A}}\sigma(\hat{\mathbf{A}}\mathbf{X}\mathbf{W}^{(0)})$ is the aggregation of the first layer embedding $\sigma(\hat{\mathbf{A}}\mathbf{X}\mathbf{W}^{(0)})$, decorrelating $\hat{\mathbf{A}}\sigma(\hat{\mathbf{A}}\mathbf{X}\mathbf{W}^{(0)})$ may lack the flexibility that incorporates VD/DVD term with other GNN architectures. Fortunately, we have the following theorem to identify a more flexible way to combine variable decorrelation with GNNs.
\begin{myTheo}
Given $p$ pairwise uncorrelated variables $\mathbf{Z}=(\mathbf{Z}_1, \mathbf{Z}_2, \cdots, \mathbf{Z}_p)$, with a linear aggregation operator $\hat{\mathbf{A}}$, the variables of $\mathbf{Y}=\hat{\mathbf{A}}\mathbf{Z}$ are still pairwise uncorrelated.
\end{myTheo}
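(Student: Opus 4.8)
The plan is to show that for any two distinct columns $j\neq k$ of $\mathbf{Y}=\hat{\mathbf{A}}\mathbf{Z}$ the cross second moment $\mathbb{E}[\mathbf{Y}_{.j}^\mathrm{T}\mathbf{Y}_{.k}]$ equals the product of the first moments, i.e.\ that $\hat{\mathbf{A}}$ transports the decorrelation hypothesis on $\mathbf{Z}$ to $\mathbf{Y}$. Since all variables are assumed centered (the footnote in Section~\ref{sec::Theoretical}), and a fixed linear combination of zero-mean quantities is again zero-mean, $\mathbb{E}[\mathbf{Y}_{.k}]=\mathbf{0}$ for every node; hence it suffices to prove that the pairwise covariances of the coordinates of $\mathbf{Y}$ vanish.

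First I would write the aggregation coordinate-wise, $Y_{ik}=\sum_{a}\hat{A}_{ia}Z_{ak}$, and expand the covariance of two coordinates at node $i$:
\[
\mathrm{Cov}(Y_{ik},Y_{il})=\sum_{a,b}\hat{A}_{ia}\hat{A}_{ib}\,\mathrm{Cov}(Z_{ak},Z_{bl}).
\]
Next I would split the sum into cross-node terms ($a\neq b$) and within-node terms ($a=b$). For the within-node terms, $\mathrm{Cov}(Z_{ak},Z_{al})=\mathrm{Cov}(\mathbf{Z}_k,\mathbf{Z}_l)=0$ whenever $k\neq l$, which is exactly the hypothesis that the $p$ variables are pairwise uncorrelated. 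For the cross-node terms I would invoke the standing stable-learning convention that distinct nodes furnish independent, or at least uncorrelated, samples of the representation — the same i.i.d.-sample bookkeeping already used implicitly in Section~\ref{sec::Theoretical} to replace empirical averages by expectations up to $o_p(1)$ — so that $\mathrm{Cov}(Z_{ak},Z_{bl})=0$ for $a\neq b$. Both groups of terms vanish, so $\mathrm{Cov}(Y_{ik},Y_{il})=0$ for every $i$, which together with $\mathbb{E}[\mathbf{Y}_{.k}]=\mathbf{0}$ yields the claimed pairwise uncorrelatedness of the variables of $\mathbf{Y}$. Equivalently, in matrix form the cross-covariance matrix of columns $j$ and $k$ of $\mathbf{Z}$ is the zero matrix, and $\mathrm{Cov}(\hat{\mathbf{A}}\mathbf{Z}_{.j},\hat{\mathbf{A}}\mathbf{Z}_{.k})=\hat{\mathbf{A}}\,\mathrm{Cov}(\mathbf{Z}_{.j},\mathbf{Z}_{.k})\,\hat{\mathbf{A}}^\mathrm{T}=\mathbf{0}$ by bilinearity, so the conclusion is insensitive to whether ``uncorrelated'' is read per-node or in the full-covariance sense.

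The step I expect to be the real crux is not the algebra but recognizing that some cross-node structure is indispensable: $\hat{\mathbf{A}}^\mathrm{T}\hat{\mathbf{A}}$ is not proportional to the identity for a genuine graph-aggregation operator, so without the independent-samples assumption $\hat{\mathbf{A}}$ could in general manufacture correlations among the columns and the statement would be false. I would therefore state that assumption explicitly at the outset of the proof, and, as a minor point, note that the paper's centering convention is what removes the parallel nuisance coming from node-dependent means when $\hat{\mathbf{A}}$ is not doubly stochastic. With these two points granted, the computation above closes the proof, and the practical consequence — which I would record as a remark — is that decorrelating the penultimate-layer embedding before the final aggregation already forces the post-aggregation embeddings to be decorrelated, which is exactly the flexible plug-in point for DVD advertised in Section~\ref{sec::3.3}.
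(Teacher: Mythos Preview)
Your proposal is correct and follows essentially the same route as the paper's own proof: expand the covariance by bilinearity, then kill the cross-sample terms by the independent-samples assumption and the within-sample terms by the pairwise-uncorrelated hypothesis. The paper encodes the cross-node assumption by declaring the $n$ node representations to be ``simple random samples'' of $(\mathbf{Z}_i,\mathbf{Z}_j)$ and then writes $\mathrm{Cov}(\mathbf{Z}_i^{(k)},\mathbf{Z}_j^{(l)})=\delta_{ij}$ directly; your version makes that same assumption explicit and correctly flags it as the crux, but the underlying argument is identical.
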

\begin{proof} Let $\mathbf{Z}=\{\mathbf{Z}_1, \mathbf{Z}_2, \cdots, \mathbf{Z}_p\}$ be $p$ pairwise uncorrelated variables. $\forall \mathbf{Z}_{i}, \mathbf{Z}_{j}\in \mathbf{Z}$, $(\mathbf{Z}_{i}^{(1)}, \mathbf{Z}_{i}^{(2)}, \cdots, \mathbf{Z}_{i}^{(n)})$ and $(\mathbf{Z}_{j}^{(1)}, \mathbf{Z}_{j}^{(2)}, \cdots, \mathbf{Z}_{j}^{(n)})$ are $n$ \textit{simple random samples} drawn from $\mathbf{Z}_{i}$ and $\mathbf{Z}_{j}$ respectively, and have same distribution with $\mathbf{Z}_{i}$ and $\mathbf{Z}_{j}$. Given a linear aggregation matrix $\hat{\mathbf{A}}=(a_{ij})$, $\forall s, v\in (1, 2, \cdots, n)$, let $\mathbf{Y}_{i}^{(s)}=\sum_{k=1}^{n} a_{sk}\mathbf{Z}_{i}^{(k)}$ and $\mathbf{Y}_{j}^{(v)}=\sum_{l=1}^{n} a_{vl}\mathbf{Z}_{j}^{(l)}$, and we have following derivation:
\begin{equation}
\begin{aligned}
\text{Cov}(\mathbf{Y}_{i}^{(s)}, \mathbf{Y}_{j}^{(v)})&=\text{Cov}(\sum_{k=1}^{n} a_{sk}\mathbf{Z}_{i}^{(k)},\sum_{l=1}^{n} a_{vl}\mathbf{Z}_{j}^{(l)})\\
&=\sum_{k=1}^{n}\sum_{l=1}^{n}a_{sk}a_{vl}\text{Cov}(\mathbf{Z}_{i}^{(k)},\mathbf{Z}_{j}^{(l)})\\
&=\sum_{k=1}^{n}\sum_{l=1}^{n}a_{sk}a_{vl}\delta_{ij},
\end{aligned}
\nonumber
\end{equation}
where $\delta_{ij}=0$ when $i\neq j$, otherwise $\delta_{ij}=1$. Therefore, when $i\neq j$, we have $\text{Cov}(\mathbf{Y}_{i}^{(s)}, \mathbf{Y}_{j}^{(v)})=0$ and $\text{Cov}(\mathbf{Y}_{i}, \mathbf{Y}_{j})=0$. Extended the conclusion to multiple variable, $\mathbf{Y}=(\mathbf{Y}_1, \mathbf{Y}_2, \cdots, \mathbf{Y}_n)$ are pairwise uncorrelated.
\end{proof}
The theorem indicates that if the variables of embeddings $\mathbf{Z}$ are uncorrelated, after any form of linear neighborhood aggregation $\hat{\mathbf{A}}$, e.g., average, attention or sum, the variables of transformed embeddings $\mathbf{Y}$ would be also uncorrelated. Therefore, decorrelating $\sigma(\hat{\mathbf{A}}\mathbf{X}\mathbf{W}^{(0)})$ can also reduce the estimation bias. For a $K$ layers of GNN, we can directly decorrelate the output of $(K-1)$-th layer, i.e.,  $\sigma(\hat{\mathbf{A}}\cdots\sigma(\hat{\mathbf{A}}\mathbf{X}\mathbf{W}^{(0)})\cdots\mathbf{W}^{(K-2)})$ for a $K$ layers of GCN.

\par The previous analysis finds a flexible way to incorporate VD/DVD term with GNNs, however, recall that we analyze GNNs based on the least-squares loss in Eq. (\ref{equ::square loss}), and most existing GNNs are designed for the classification task. Therefore, in the following, we analyze that the previous conclusions are still applicable in classification. We consider the cases that the softmax layer is used as the output layer of GNNs and loss is the cross-entropy error function. We use the Newton-Raphson update rule~\cite{Bishop2006pattern} to bridge the gap between linear regression and multi-classification. According to the Newton-Raphson update rule, the update formula for transformation matrix $\mathbf{W}^{(K-1)}$ of the last layer of GCN can be derived:
\begin{equation}
    \begin{aligned}
    \mathbf{W}_{.j}^{(\text{new})} &= \mathbf{W}_{.j}^{(\text{old})} - (\mathbf{H}^\mathrm{T}\mathbf{R}\mathbf{H})^{-1}\mathbf{H}^\mathrm{T}(\mathbf{H}\mathbf{W}_{.j}^{(\text{old})}-\mathbf{Y}_{.j})\\
     &=(\mathbf{H}^\mathrm{T}\mathbf{R}\mathbf{H})^{-1}\{\mathbf{H}^\mathrm{T}\mathbf{R}\mathbf{H}\mathbf{W}_{.j}^{(\text{old})}-
     \mathbf{H}^\mathrm{T}(\mathbf{H}\mathbf{W}_{.j}^{(\text{old})}-\mathbf{Y}_{.j})\}\\
     &=(\mathbf{H}^\mathrm{T}\mathbf{R}\mathbf{H})^{-1}\mathbf{H}^\mathrm{T}\mathbf{R}\mathbf{z},
   \end{aligned}
	\label{equ:W_j_update}
\end{equation}
where $\mathbf{R}_{kj}=-\sum_{n=1}^{N}\mathbf{H}_{n}\mathbf{W}_{.k}^{(\text{old})}(\mathbf{I}_{kj}-\mathbf{H}_{n}\mathbf{W}_{.j}^{(\text{old})})$ is a weighing matrix and $\mathbf{I}_{kj}$ is the element of the identity matrix, and $\mathbf{z}=\mathbf{H}\mathbf{W}_{.j}^{(\text{old})}-\mathbf{R}^{-1}(\mathbf{Y}_{.j}-\mathbf{W}_{.j}\mathbf{H})$ is an \textit{effective target value}. Eq. (\ref{equ:W_j_update}) takes the form of a set of \textit{normal equations} for a weighted least-squares problem. As the weighing matrix $\mathbf{R}$ is not constant but depends on the parameter vector $\mathbf{W}^{(\text{old})}_{.j}$, we must apply the normal equations iteratively. Each iteration uses the last iteration weight vector $\mathbf{W}^{(\text{old})}_{.j}$ to calculate a revised weighing matrix $\mathbf{R}$ and regresses the target value $\mathbf{z}$ with $\mathbf{H}\mathbf{W}_{.j}^{(\text{new})}$. Therefore, the variable decorrelation can also be applied to the GNNs with softmax classifier to reduce the estimation bias in each iteration. Note that according to update formula Eq. (\ref{equ:W_j_update}), we should calculate the inverse matrix $(\mathbf{H}^\mathrm{T}\mathbf{R}\mathbf{H})^{-1}$ in each iteration, which requires high computation. In practice, we use gradient descent methods to approximate Newton-Raphson update rule and it works well in experiments.

\par Fig.~\ref{fig::Model}(b) is the framework of GNN-DVD, where we input the labeled nodes' embeddings $\tilde{\mathbf{H}}^{(K-1)}$ into the regularizer $\mathcal{L}_{DVD}(\tilde{\mathbf{H}}^{(K-1)})$. As GCN has the formula $\text{softmax}(\hat{\mathbf{A}}\mathbf{H}^{(K-1)}\mathbf{W}^{(K-1)})$, the variable weights of $\tilde{\mathbf{H}}^{(K-1)}$ used for differentiating $\mathcal{L}_{DVD}(\tilde{\mathbf{H}}^{(K-1)})$ can be computed from
\begin{equation}
    \alpha=\text{Var}(\mathbf{W}^{(K-1)}, \text{axis}=1)
    \label{Eq:alpha}
\end{equation}
where $\text{Var}(\cdot, \text{axis}=1)$ refers to calculating the variance of each row of some matrix and it reflects each variable's weight for classification which is similar to the regression coefficients. Note that when incorporating VD term with GNNs, we do not need compute the variable weights. Then the sample weights $\mathbf{w}$ learned by DVD term have the ability to remove the correlation in $\tilde{\mathbf{H}}^{(K-1)}$. Inspired by sample weighting methods~\cite{zou2020counterfactual}, we propose to use this sample weights to reweight softmax loss:
 \begin{equation}
    \min\limits_{\theta} \mathcal{L}_G = \sum\limits_{l\in\mathcal{Y}_L} \mathbf{w}_l \cdot \ln(q(\tilde{\mathbf{H}}^{(K)}_{l})\cdot \mathbf{Y}_{l}),\\
    \label{Eq:GNN_DVD}
\end{equation}
where $q(\cdot)$ is the softmax function, $\mathcal{Y}_L$ is the set of labeled node indices and $\theta$ is the set of parameters of GCN. After reweighting nodes by these weights, we can create a pseudo-population where the biases in node neighborhood are effectively reduced, with which the off-the-shelf GNN models can achieve more accurate prediction under agnostic environments. The whole algorithm is summarized in Algorithm~\ref{alg:GNN-DVD}.

\begin{algorithm}[!htbp]
    \caption{GNN-DVD Algorithm}
\label{alg:GNN-DVD}
\SetKwInOut{Input}{\textbf{Input}}\SetKwInOut{Output}{\textbf{Output}}\SetKwInOut{Initialization}{\textbf{Initialization}} 
\Input{Training graph $\mathcal{G}_{train}=\{\mathbf{A},\mathbf{X},\mathbf{Y}\}$, and indices of labeled nodes $\mathcal{Y}_L$; Max iteration:$maxIter$
		  }
		    \Output{GNN parameter $\theta$ and sample weights $\mathbf{w}$
		      }
		    \Initialization{Let $\mathbf{w}= \omega\odot\omega$ and initialize sample weights $\omega$ with $\mathbf{1}$; Initialize GNN's parameters $\theta$ with random uniform distribution; Iteration $t\leftarrow 0$}
\BlankLine 
    \While{not converged \rm{or} $t<maxIter$}
    {

      Optimize $\theta^{(t)}$ to minimize $\mathcal{L}_G$ via Eq. (\ref{Eq:GNN_DVD});\\
      Calculate variable weights $\alpha^{(t)}$ from $\mathbf{W}^{(K-1)}$ via Eq. (\ref{Eq:alpha});\\
      Optimize $\omega^{(t)}$ to minimize $\mathcal{L}_{DVD}(\tilde{\mathbf{H}}^{(K-1)})$ via Eq. (\ref{equ:EXY_all_rewrite_alpha});\\
      $t = t+1$;
    }
    \textbf{Return:} $\theta$ and $\mathbf{w}=\omega\odot\omega$
\end{algorithm}

To optimize our GNN-DVD algorithm, we propose an iterative method. Firstly, we let $\mathbf{w}= \omega\odot\omega$ to ensure non-negativity of $\mathbf{w}$ and initialize sample weight $\omega_i=1$ for each sample $i$ and GNN's parameters $\theta$ with random uniform distribution. Once the initial values are given, in each iteration, we fix the sample weights $\omega$ and update the GNN's parameters $\theta$ by $\mathcal{L}_G$ with gradient descent, then compute the confounder weights $\alpha$ from the linear transform matrix $\mathbf{W}^{(K-1)}$. With $\alpha$ and  fixing the GNN's parameters $\theta$, we update the sample weights $\omega $ with gradient descent to minimize $\mathcal{L}_{DVD}(\mathbf{H}^{(K-1)})$. We iteratively update the sample weights $\mathbf{w}$ and GNN's parameters $\theta$ until $\mathcal{L}_G$ converges.

\subsection{Extension to GAT}
\label{sec::extend to GAT}
We can easily incorporate the VD/DVD term with other GNNs. We combine them with GAT and more extensions leave as future work. GAT utilizes an attention mechanism to aggregate neighbor information. It also follows the linear aggregation and transformation steps. Similar to GCN, the hidden embedding $\tilde{\mathbf{H}}^{(K-1)}$ is the input of VD/DVD term, and the variable weights $\alpha$ are calculated from the transformation matrix $\mathbf{W}^{(K-1)}$ and the sample weights $\mathbf{w}$ are used to reweight the softmax loss. Note that the original paper utilizes the same transformation matrix $\mathbf{W}^{(K-1)}$ for transforming embedding and learning attention values. Because $\alpha$ means the importance of each variable for classification, and it should be calculated from transformation matrix $\mathbf{W}^{(K-1)}$ for transforming embedding, hence we use separate matrices for transforming embedding and learning attention values, respectively. This modification does not change the performance of GAT in experiments.

\subsection{Complexity Analysis}
\par Compared with base models (e.g., GCN and GAT), the main incremental time cost is the complexity from VD/DVD term. For a training graph with $n$ labeled nodes, we analyze the time complexity of the VD/DVD term in each iteration. For calculating the VD loss, its complexity is $\mathcal{O}(np^2)$, where $p$ is the dimension of embeddings. And for DVD loss, its complexity is the same as VD, as the complexity of calculating variable weights $\alpha$ is $\mathcal{O}(np)$, which is relatively small comparing with $\mathcal{O}(np^2)$. For updating $\mathbf{w}$, the complexity is dominated by the step of calculating the partial gradients of the function $\mathcal{L}_{DVD}(\mathbf{H})$ with respect to variable $\mathbf{w}$. The complexity of $\frac{\partial \mathcal{L}_{DVD}(\mathbf{H})}{\partial \mathbf{w}}$ is $\mathcal{O}(np^2)$. In total, the complexity of each iteration for VD/DVD term in Algorithm~\ref{alg:GNN-DVD} is $\mathcal{O}(np^2)$. And it is quite smaller than the base models (e.g., the complexity of GCN is $\mathcal{O}({\mathcal{E}cp^2})$, where $\mathcal{E}$ is the number of edges and $c$ is the dimension of input node features).

\subsection{Discussion} In our paper, we propose to integrate two decorrelation terms (i.e., VD/DVD term) with GNN models to eliminate the estimation bias. Here we discuss the advantages and disadvantages of these two terms. VD term aggressively decorrelates all the variables learned by GNNs, however, it theoretically requires a large number of samples to achieve this goal. To overcome this dilemma, the DVD term is proposed to differentiate the variable weights in the VD term, aiming to remove the most unexpected correlation. However, due to the existence of unknown term $g(\mathbf{S})$ in Eq. (\ref{equ::METF_Y}), introducing more parameters to optimize may increase the instability of the model. Hence, when the number of labeled samples is large, performing GNN-VD may induce more stable results.

\section{Experiments}
\label{sec::Experiments}
\subsection{Datasets}
\label{sec:datasets}
\par Here, we validate the effectiveness of our methods on node classification with two kinds of selection bias, i.e., label selection bias and small sample selection bias. For label selection bias, we employ three widely used graph datasets: Cora, Citeseer and Pubmed~\cite{sen2008collective}. As in Section~\ref{sec::Experimental Investigation}, we get three biased degrees as well as the original unbiased labeled nodes for each dataset. For small sample selection bias, we conduct the experiments on NELL dataset~\cite{carlson2010toward}, where each class only has at most 1/5/10 labeled nodes for training. Due to the large scale of this dataset, the test nodes are easily to have distribution shifts from training nodes. The details of the datasets are summarized in Table~\ref{tab::data_stats}.
\begin{table*}[ht]
  \caption{Dataset statistics}
  \label{sample-table}
  \centering
  {
  \begin{tabular}{llllllll}
    \toprule
    \textbf{Dataset}   & \textbf{Type} & \textbf{Nodes} & \textbf{Edges} & \textbf{Classes} & \textbf{Features}  &\textbf{Bias degree ($\epsilon$)} & \textbf{Bias type}\\
    \midrule
    Cora      &Citation network   & 2,708 & 5,429 & 7& 1,433  & 0.7/0.8/0.9 & Label selection bias\\
    Citeseer  &Citation network       & 3,327 & 4,732 & 6& 3,703 & 0.7/0.8/0.9 & Label selection bias\\
    Pubmed    &Citation network  & 19,717 & 44,338 & 3& 500 &0.7/0.8/0.9& Label selection bias\\
    NELL     & Knowledge graph    & 65,755 & 266,144 & 210& 5,414 & 1/5/10 labeled nodes per class &Small sample selection bias\\
    \bottomrule
  \end{tabular}}
  \label{tab::data_stats}
\end{table*}
\subsection{Baselines}
We compare our proposed framework with several related baselines:
\begin{itemize}

\item  Base models: GCN~\cite{kipf2016semi} and GAT~\cite{velivckovic2017graph} are classical GNN methods. We utilize them as the base models in our framework, so they are the most related baselines to validate the effectiveness of the proposed framework.

\item GNM-GCN/GAT~\cite{zhou2019graph}: A GNN method which considers unbalanced label selection bias problem in transductive setting. They also utilize GCN/GAT as their base models.

\item Chebyshev~\cite{kipf2016semi}: It is a GCN-based method utilizing third-order Chebyshev filters.

\item SGC~\cite{pmlr-v97-wu19e}: It is a simplified GCN-based method, which reduces the excess complexity through successively removing nonlinearities and collapsing weight matrices between consecutive layers.
\item APPNP~\cite{klicpera_predict_2019}: It is one of the state-of-the-art GNN methods that combines PageRank with GCN.
\item Planetoid~\cite{yang2016revisiting}: It is a classical semi-supervised graph embedding method. We use its inductive variant.
\item MLP: It is a two-layer multilayer perceptron trained on the labeled nodes with only node features as input.
\item DGNN: It is the debiased GNN framework proposed in this paper. We incorporate the VD/DVD term with GCN/GAT under our proposed framework called GCN/GAT-VD/DVD.
\end{itemize}

\subsection{Experimental Setup}
As the Section~\ref{sec::Experimental Investigation} has described, for all datasets, to simulate the agnostic selection bias scenario, we first follow the inductive setting in~\cite{pmlr-v97-wu19e} that masks the validation and test nodes in the training phase and validation and test with the whole graph so that the test nodes will be agnostic. For GCN and GAT, we utilize the same two-layer architecture as their original paper~\cite{kipf2016semi,velivckovic2017graph}. We use the following sets of hyperparameters for GCN on Cora, Citeseer, Pubmed: 0.5 (dropout rate), $5\cdot 10^{-4}$ (L2 regularization) and 32 (number of hidden units); and for NELL: 0.1 (dropout rate), $1\cdot 10^{-5}$ (L2 regularization) and 64 (number of hidden units). For GAT on Cora, Citeseer, we use: 8 (first layer attention heads), 8 (features each head), 1 (second layer attention head), 0.6 (dropout),  $5\cdot 10^{-4}$ (L2 regularization); and for Pubmed: 8 (second layer attention head),  $1\cdot10^{-3}$ (L2 regularization), other parameters are the same as Cora and Citeseer. To fair comparison, the GNN part of our model uses the same architecture and hyper-parameters with the base model and we grid search $\lambda_1$ and $\lambda_2$ from $\{0.01, 0.1, 1, 10, 100\}$. For other baselines, we use the optimal hyper-parameters in the original literatures on each dataset. For all the experiments, we run each model 10 times with different random seeds and report its average Accuracy results.


\begin{table*}
\centering
\caption{Performance of three citation networks. The `*' indicates the best results of the baselines. Best results of all methods are indicated in bold. `\% gain over GCN/GAT' means the improvement percent of GCN/GAT-DVD against GCN/GAT.}
\resizebox{.98\textwidth}{!}{
			\begin{tabular}{|c||c|c|c|c|c|c|c|c|c|c|c|c|}
				\hline
				\multirow{2}{*}{Method}&
				\multicolumn{4}{c|}{\textbf{Cora}}&\multicolumn{4}{c|}{\textbf{Citeseer}}&\multicolumn{4}{c|}{\textbf{Pubmed}} \cr\cline{2-13}
				& Unbiased	& Light            & Medium                         & Heavy        & Unbiased   & Light           & Medium          & Heavy      &Unbiased   & Light     & Medium    & Heavy                     \cr\cline{2-13}
				\hline
				\hline
				
				MLP    &0.5296  & 0.5624          & 0.5197                      & 0.5087        &0.5438  & 0.4532          & 0.3757          & 0.3893     &0.6914    & 0.6852          & 0.6620   & 0.6378
				\cr
				Planetoid~\cite{yang2016revisiting}   &0.6650   & 0.5890          & 0.5240                  & 0.5180      &0.6720    & 0.5160          & 0.5140          & 0.4880     &0.744     & 0.7160          & 0.6770   & 0.6680
					\cr
				Chebyshev~\cite{defferrard2016convolutional}  & 0.7407   & 0.7116          & 0.7006                  & 0.6809       &$0.7232^*$   & 0.6542          & 0.6276          & 0.5920      &0.7450    & 0.7358          & 0.6862   & 0.6732
					\cr
				SGC~\cite{pmlr-v97-wu19e}    &0.779  & 0.7800          & 0.7800                  & 0.7530    &0.724      & 0.6780          & $0.6730^*$          & 0.6200      &0.781    & $\mathbf{0.7880}^*$          & 0.7560   & 0.6800
					\cr
				APPNP~\cite{klicpera_predict_2019}   &$0.8132^*$   &0.7913          & 0.7689                  & 0.7629    &0.6862     & 0.6478          & 0.6052        & 0.5903     &0.7731    & 0.7639         & 0.7369   & 0.6862
                \cr\hline
                GNM-GCN~\cite{zhou2019graph} &0.7594 &0.7423         & 0.7531                  & 0.7196      &0.6054   & 0.5793          & 0.5717        & 0.5125    &0.7654     & 0.7552         & 0.7381   & 0.7072\cr
                GNM-GAT~\cite{zhou2019graph} &0.7976 &0.7875         & 0.7638                  & 0.7404   &0.6832      & 0.6524          & 0.6487        & 0.5865     &   0.7666 & 0.7438        & 0.7568   & 0.6891
				
				\cr\hline
				GCN~\cite{kipf2016semi}  &0.7909    & 0.7851          & 0.7775               & 0.7422      &0.7075   & 0.6786         & 0.5952         & 0.5551       &$0.7845^*$      & 0.7673          & 0.7545   & $0.7247^*$
				\cr
				GCN-VD  &0.7980   & 0.7951          & 0.7855               & 0.7522       &0.7122   & 0.6844     & 0.6676 & 0.6408    &\textbf{0.7888}     & 0.7727 & 0.7729         & 0.7399
				
				\cr
				GCN-DVD  &0.7951   & 0.7959        & 0.7885             & 0.7555   &   0.7128   & 0.6908     & 0.6769         & 0.6496    &0.7874    & 0.7741
 & \textbf{0.7746}        & \textbf{0.7542}

	            \cr
	            \% gain over GCN &0.53\% & 1.38\%        & 1.41\%              & 1.79\%       &0.75\%  & 1.8\%    & 14.2\%         & 17.0\%     &0.37\%    & 0.89\%
 & 2.67\%        & 4.07\%
			\cr\hline 	GAT~\cite{velivckovic2017graph}  &0.8100    & $0.8067^*$          & $0.8019^*$               & 0.7578       &0.7224  & $0.7033^*$         & 0.6683      &$0.6475^*$   & 0.7714              & 0.7665          & $0.7579^*$   & 0.7068
				\cr
				GAT-VD    &0.8133 & 0.8146          & 0.8079               & \textbf{0.7708}      &0.7288    & 0.7149     & \textbf{0.6833}             & 0.6611   &0.7732       & 0.7783 & 0.7689         & 0.7149
				\cr
				GAT-DVD  &\textbf{0.8139}   & \textbf{0.8179}          & \textbf{0.8119}               & 0.7694       &\textbf{0.7294}   & \textbf{0.7172}     & 0.6825             & \textbf{0.6627}   &0.7735       & 0.7788 &0.7723         & 0.7210
				
				\cr
	            \% gain over GAT  & 0.48\% & 1.39\%        & 1.26\%              & 1.53\%      &0.97\%   & 1.97\%    & 2.12\%         & 2.34\%       &0.27\%  & 1.6\%
 & 1.9\%        & 2.0\%
			\cr\hline					
			\end{tabular}}
	\label{tab:performance}
\end{table*}
\begin{table*}[ht]
  \caption{Performance of NELL. The `*' indicates the best results of the baselines. Best results of all methods are indicated in bold. `Improvement' means the improvement percent of GCN-VD/DVD (selected better results) against GCN.}
  \label{sample-table}
  \centering{
  \begin{tabular}{llllllll}
    \cr\hline
    Dataset   & MLP & Planetoid & SGC  & GCN  &GCN-VD & GCN-DVD & Improvement
    \cr\hline
    NELL-1       & 0.2385 &0.3901 & 0.4128  & $0.4416^*$ & 0.4652  & \textbf{0.4734} & 7.2\%
    \cr
    NELL-5      &0.4938   & 0.3519
 & 0.6295
 &  $0.7030^*$ & \textbf{0.7424}  & 0.7361 & 5.6\%
    \cr
    NELL-10      &  0.5838 & 0.5149 & 0.6275 & $0.7615^*$ & \textbf{0.7734}
  & 0.7727  &1.6\%
    \cr\hline
  \end{tabular}}
  \label{tab::NELL}
\end{table*}

\subsection{Results on Label Selection Bias Datasets}
\par The results are given in Table~\ref{tab:performance}, and we have the following observations. First, the proposed models (i.e., GCN/GAT with VD/DVD terms) always achieve the best performances in most cases, which well demonstrates that the effectiveness of our proposed debiased GNN framework. Second, comparing with base models, our proposed models all achieve up to 17.0\%
performance improvements, and gain larger improvements under heavier bias scenarios. Since the major difference between our model with base models is the VD/DVD regularizer, we can safely attribute the significant improvements to the effective decorrelation term and its seamless joint with GNN models. Third, GCN/GAT-DVD achieves better results than GCN/GAT-VD in most cases. It validates the importance and effectiveness of differentiating variables' weights in the semi-supervised setting. Moreover, our model still outperforms baselines in the unbiased setting. In real applications, it is hard to control the collection process without any distribution shift from the training set to the test set~\cite{huang2006correcting}. Therefore, the problem we study is ubiquitous in reality and our method is effective in most scenarios.

\subsection{Results on Small Sample Selection Bias Datasets}
As NELL is a large-scale graph, we cannot run GAT on a single GPU with 16GB memory. We only perform GCN-VD/DVD and compare with representative methods which can perform on this dataset.
The results are shown in Table~\ref{tab::NELL}. First, GCN-VD/DVD achieves significant improvements over GCN. It indicates that selection bias could be induced by a small number of labeled nodes and our proposed method relieve the estimation bias. Moreover, with fewer labeled nodes, i.e., larger selection bias, our methods achieve larger improvements over base models. It further validates our method is an effective method against heavy bias. Moreover, GCN-DVD further improves GCN-VD with a large margin on NELL-1 dataset. It means that decorrelating all the variable pairs equally is suboptimal, and our differentiated strategy is effective when labeled nodes are scarce. With the number of labeled nodes increases, it may not necessary to differentiate the variable weights, but GCN-DVD achieves competitive results with GCN-VD and still outperforms the base model with a clear margin.

\begin{figure*}[!htbp]
\centering
\subfigure[Cora]{
\includegraphics[ width=4cm]{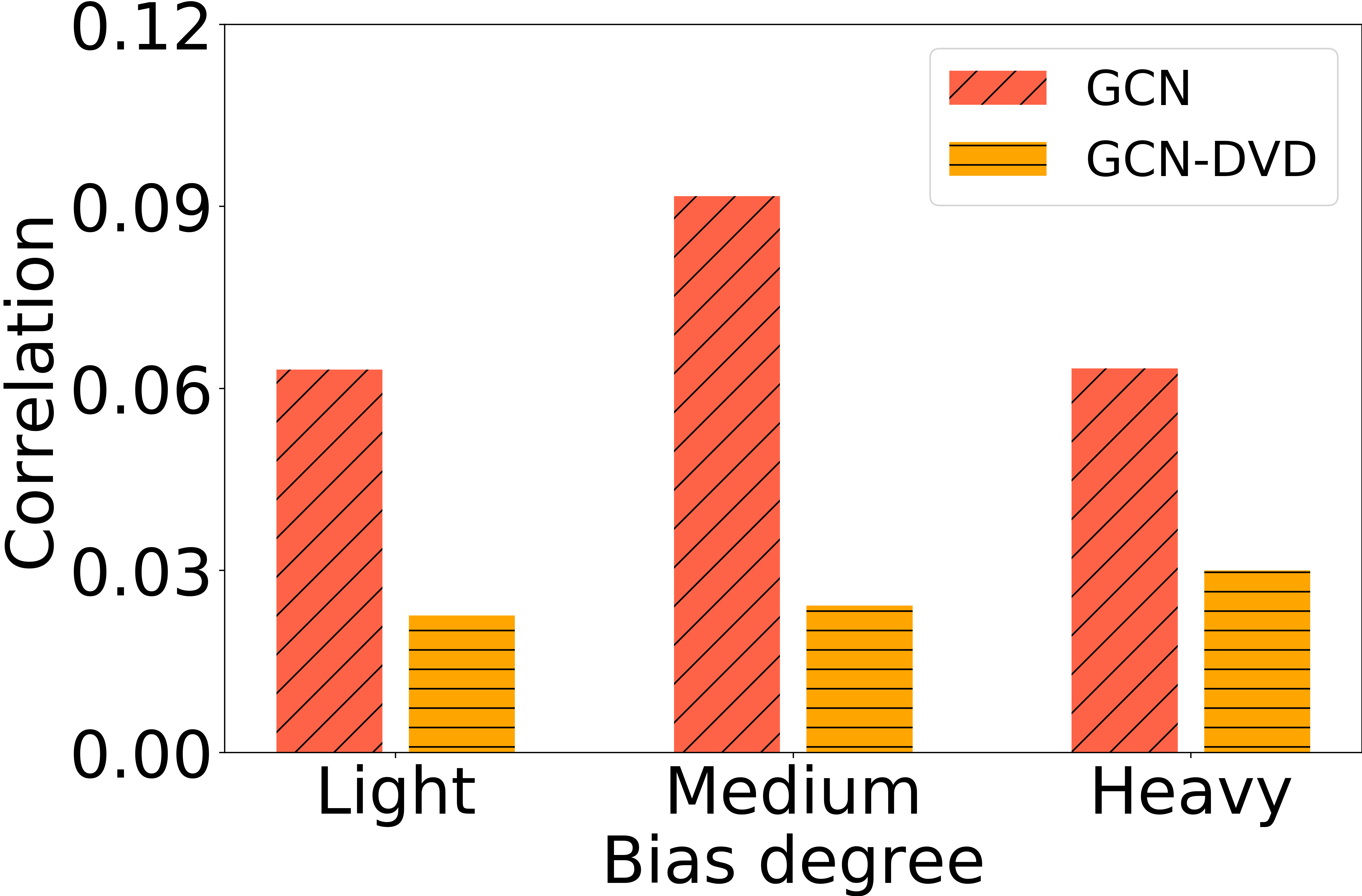}
\label{fig:gradient}
}
\subfigure[Citeseer]{
\includegraphics[ width=4cm]{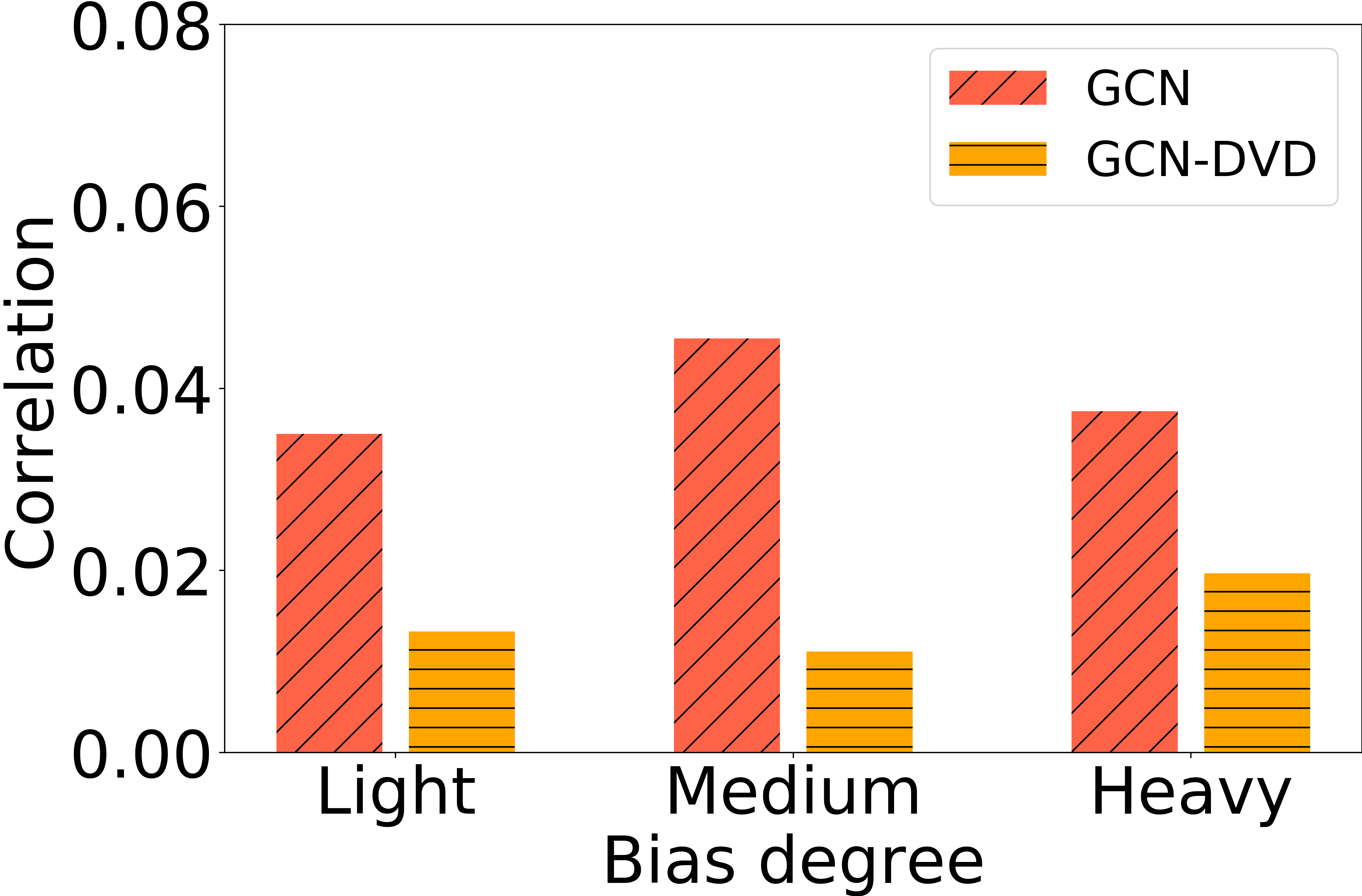}
\label{fig:gradient}
}
\subfigure[Pubmed]{
\includegraphics[width=4cm]{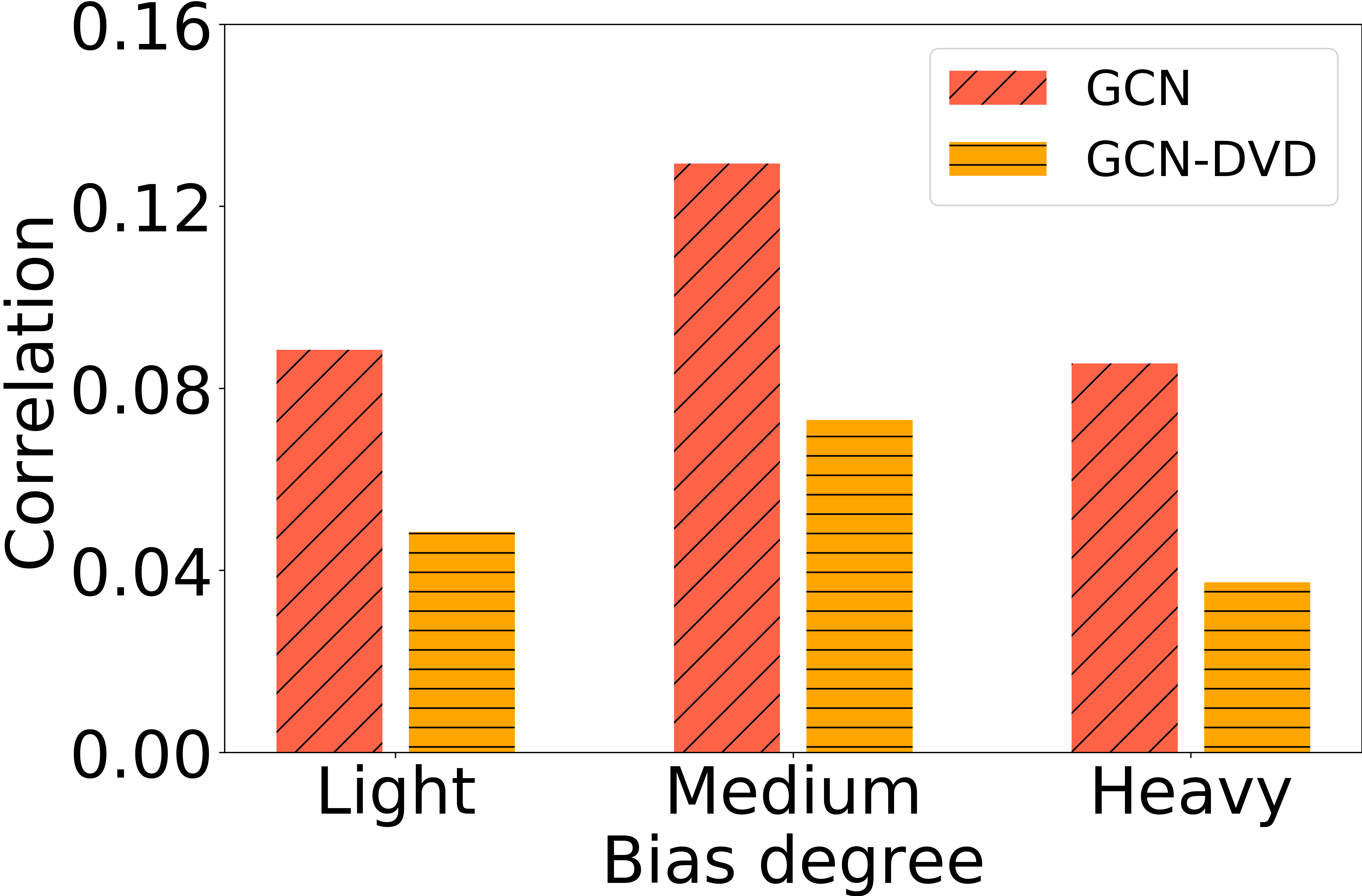}
\label{fig:gradient}
}

\caption{Embedding correlation analysis on unweighted and weighted GCN.}
\label{fig:correlation}
\end{figure*}
\subsection{Sample Weight Analysis}
Here we analyze the effect of sample weights $\mathbf{w}$. We compute the amount of correlation in the labeled nodes' embeddings $\tilde{\mathbf{H}}^{(K-1)}$ learned by standard GCN and the weighted embeddings of the same layer learned by GCN-DVD. Note that, the weights are the last iteration of sample weights of GCN-DVD.  Following~\cite{cogswell2015reducing,wang2020decorrelated}, the amount of correlation of GCN and GCN-DVD is measured by Frobenius norm of cross-corvairance matrix $||C||_F^2$ computed from variables of $\tilde{\mathbf{H}}^{(K-1)}$ and weighted $\tilde{\mathbf{H}}^{(K-1)}$ respectively, where $C_{ij}$ represents the covariance between pairwise variable $i$ and $j$ and the main diagonal of $C$ is set as zero vector.  Figure~\ref{fig:correlation} shows the amount of correlation in unweighted and weighted embeddings, and we observe that the embeddings' correlations in all datasets are reduced, indicating that the weights learned by GCN-DVD can reduce the correlation between embedded variables. Moreover, as it is hard to reduce the correlation to zero, the necessity of differentiating variables' weights is further validated.

\begin{figure*}[!htbp]
\centering

\subfigure[Light]{
\includegraphics[height=1.3in, width=1.8in]{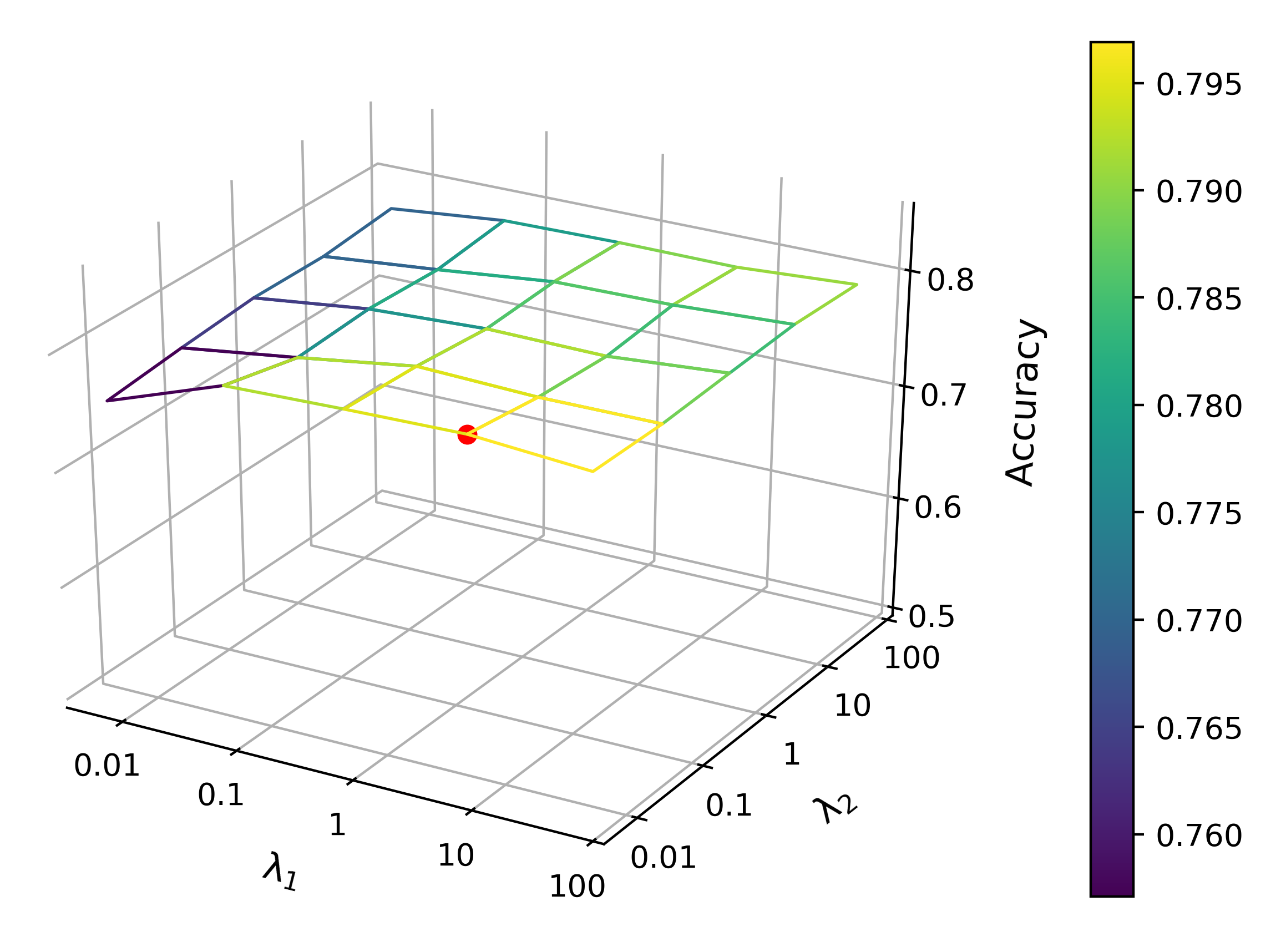}
\label{fig:gradient}
}
\subfigure[Medium]{
\includegraphics[height=1.3in, width=1.8in]{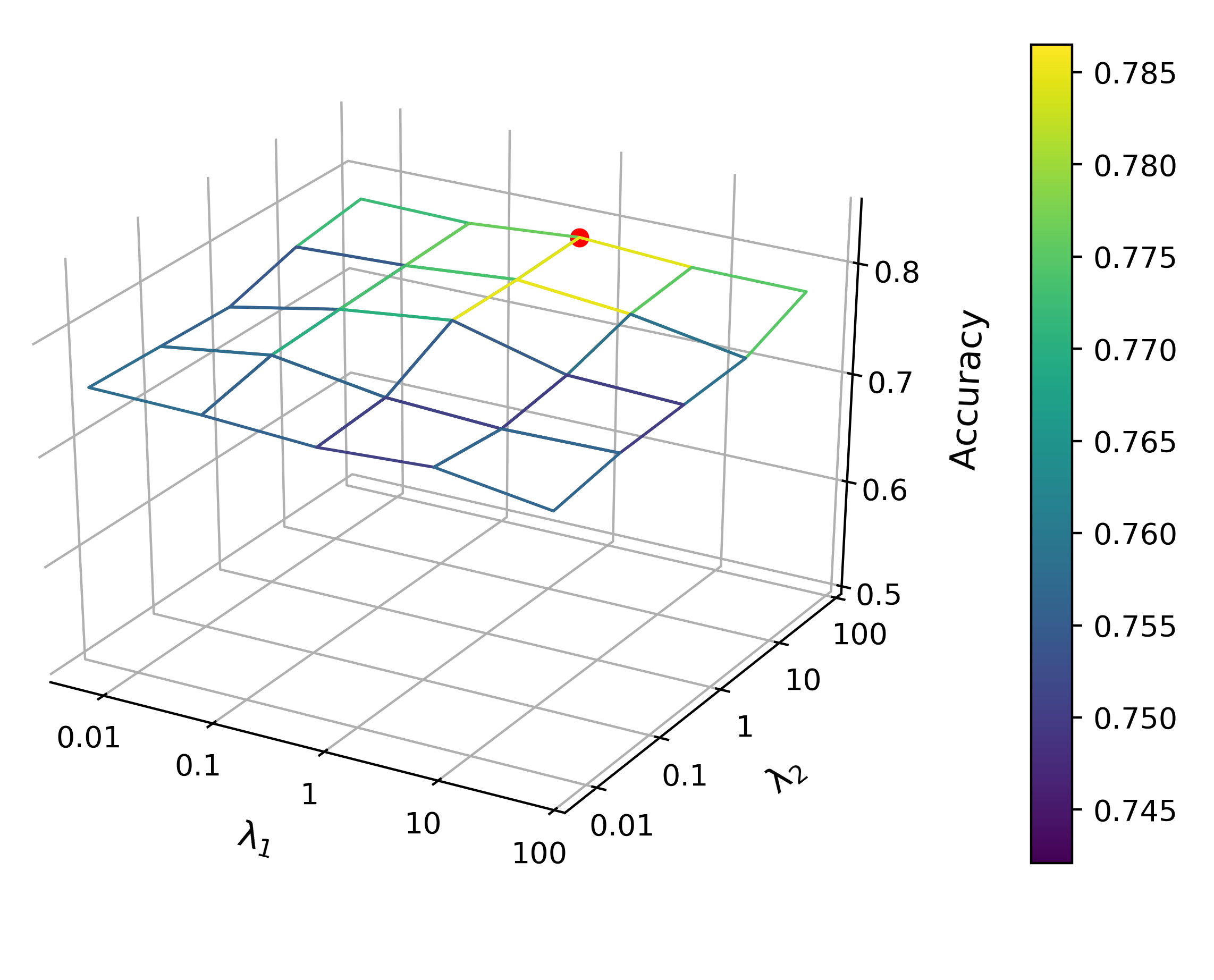}
\label{fig:gradient}
}
\subfigure[Heavy]{
\includegraphics[height=1.3in, width=1.8in]{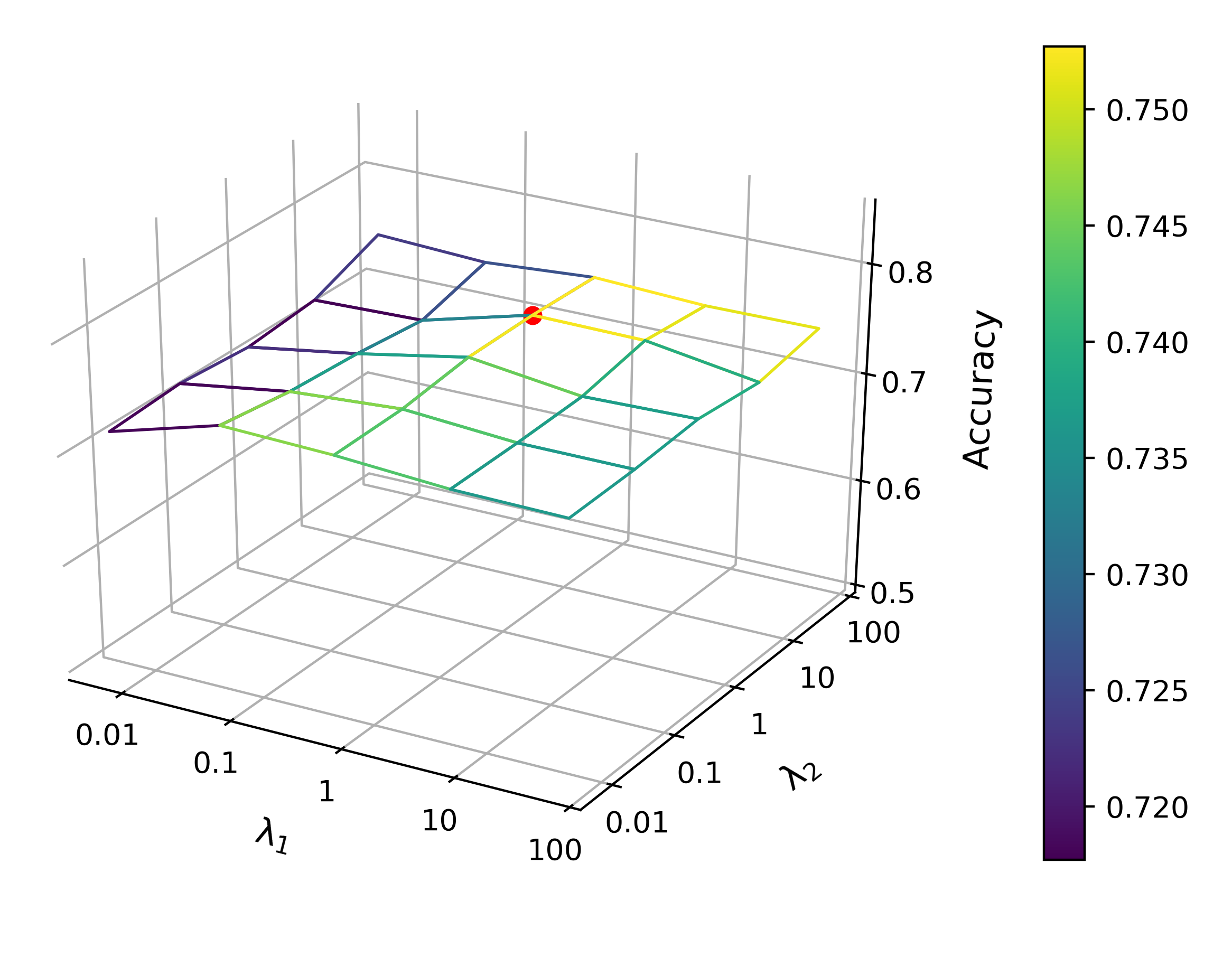}
\label{fig:123}
}
\caption{Accuracy of GCN-DVD with different $\lambda_1$ and $\lambda_2$ on different biased Cora datasets.}
\label{fig:para}
\end{figure*}

\begin{figure*}[!htbp]
\centering

\subfigure[Light]{
\includegraphics[height=1.3in, width=1.8in]{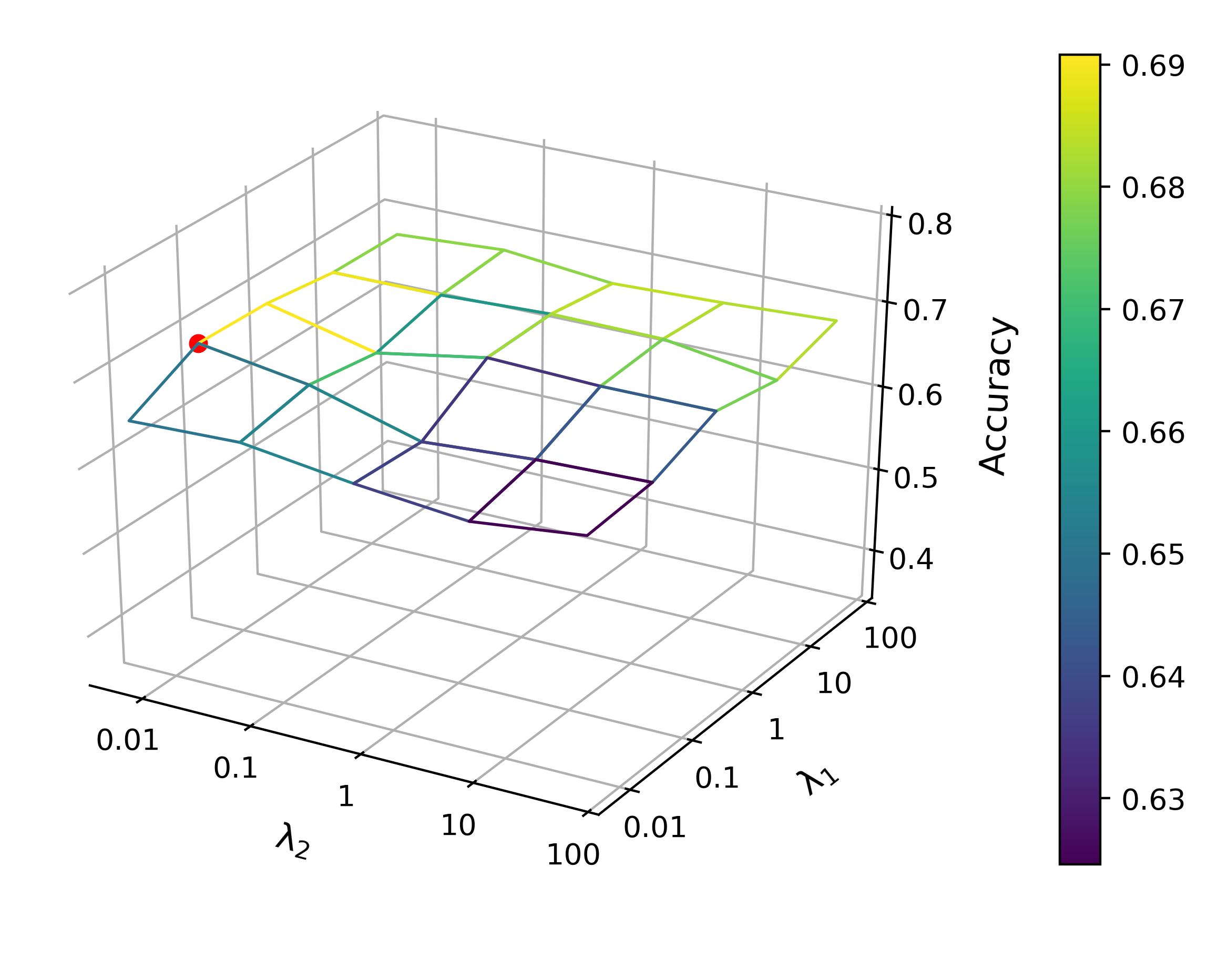}
\label{fig:gradient}
}
\subfigure[Medium]{
\includegraphics[height=1.3in, width=1.8in]{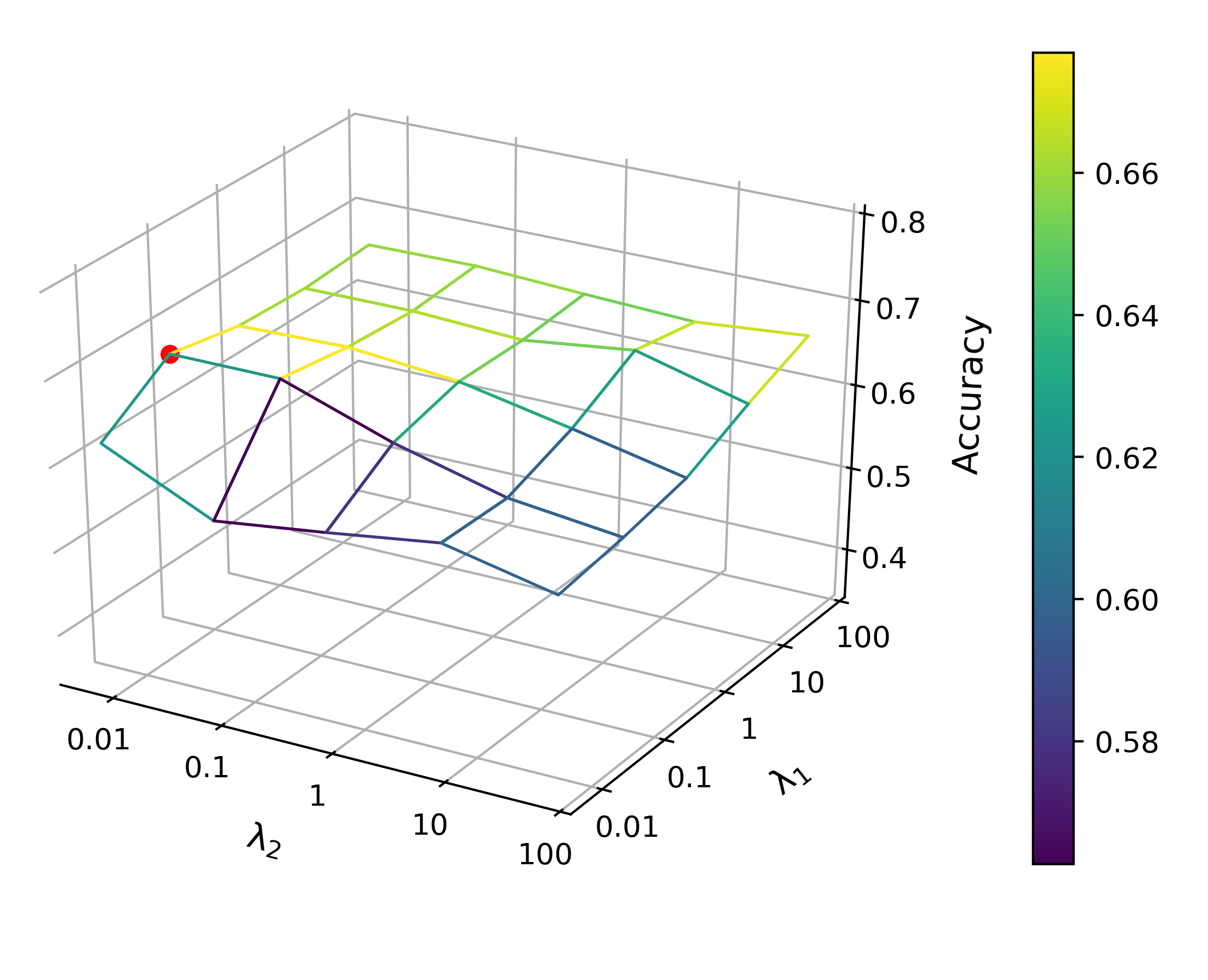}
\label{fig:gradient}
}
\subfigure[Heavy]{
\includegraphics[height=1.3in, width=1.8in]{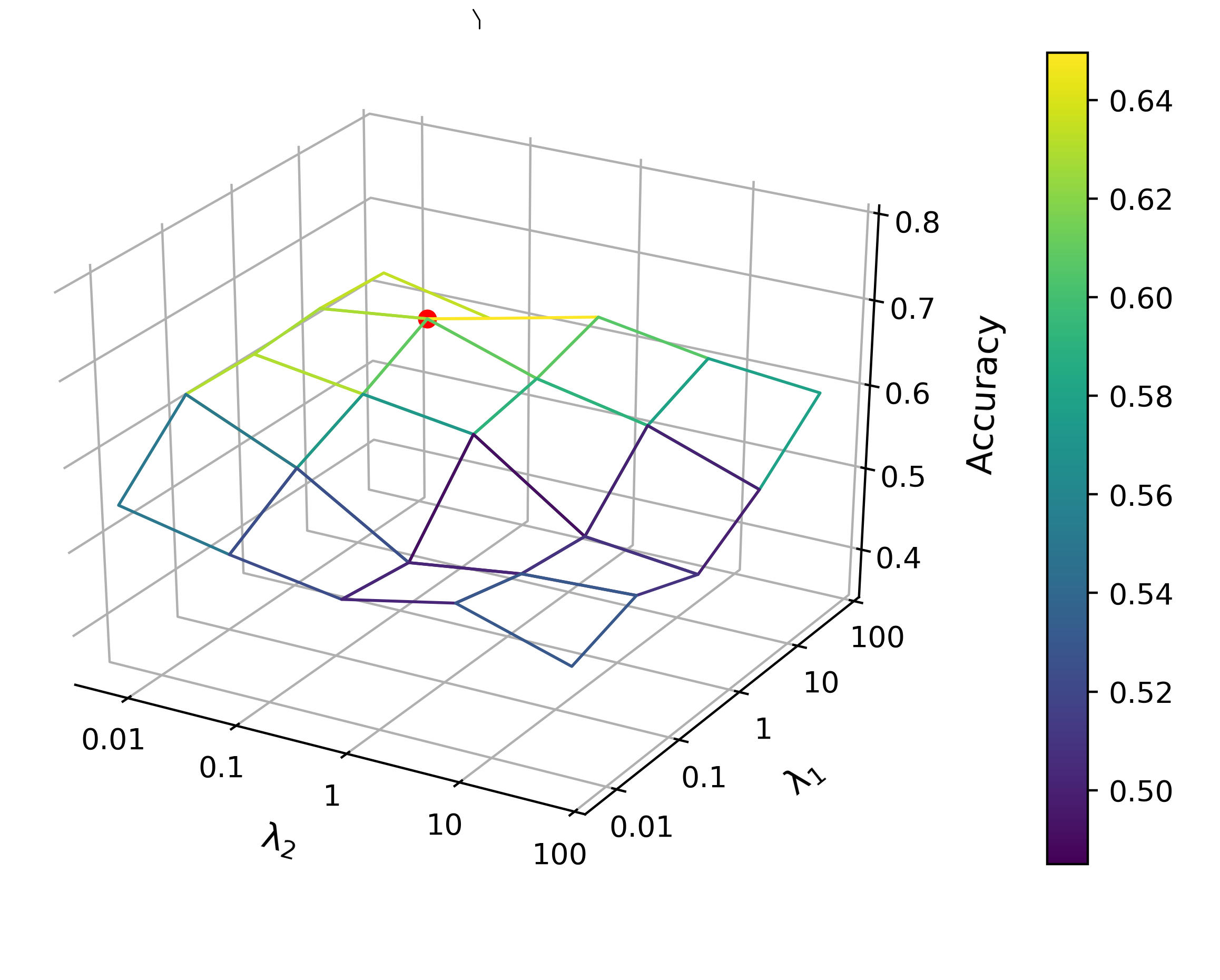}
\label{fig:123}
}
\caption{Accuracy of GCN-DVD with different $\lambda_1$ and $\lambda_2$ on different biased Citeseer datasets.}
\label{fig:para_citeseer}
\end{figure*}

\begin{figure*}[!htbp]
\centering

\subfigure[Light]{
\includegraphics[height=1.3in, width=1.8in]{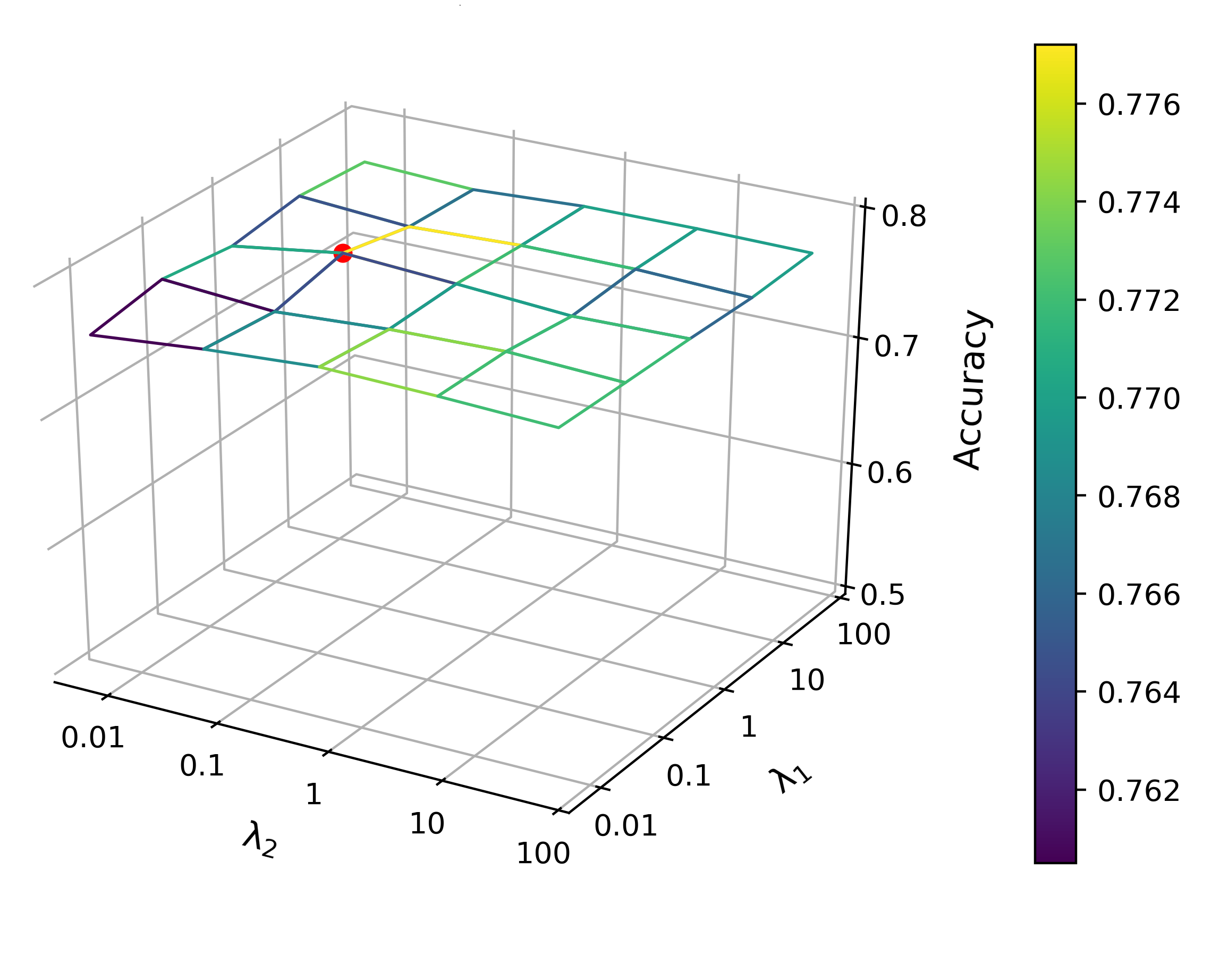}
\label{fig:gradient}
}
\subfigure[Medium]{
\includegraphics[height=1.3in, width=1.8in]{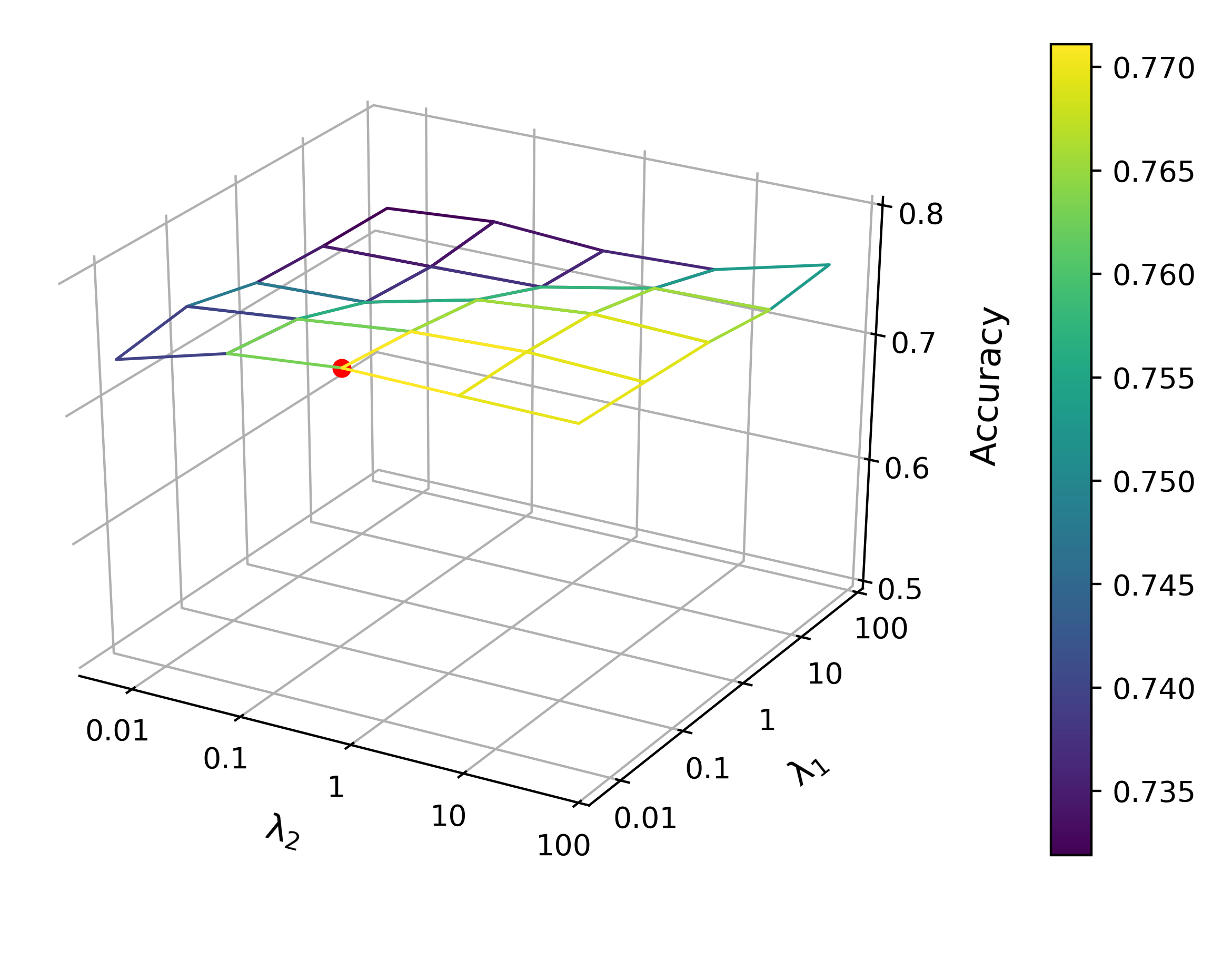}
\label{fig:gradient}
}
\subfigure[Heavy]{
\includegraphics[height=1.3in, width=1.8in]{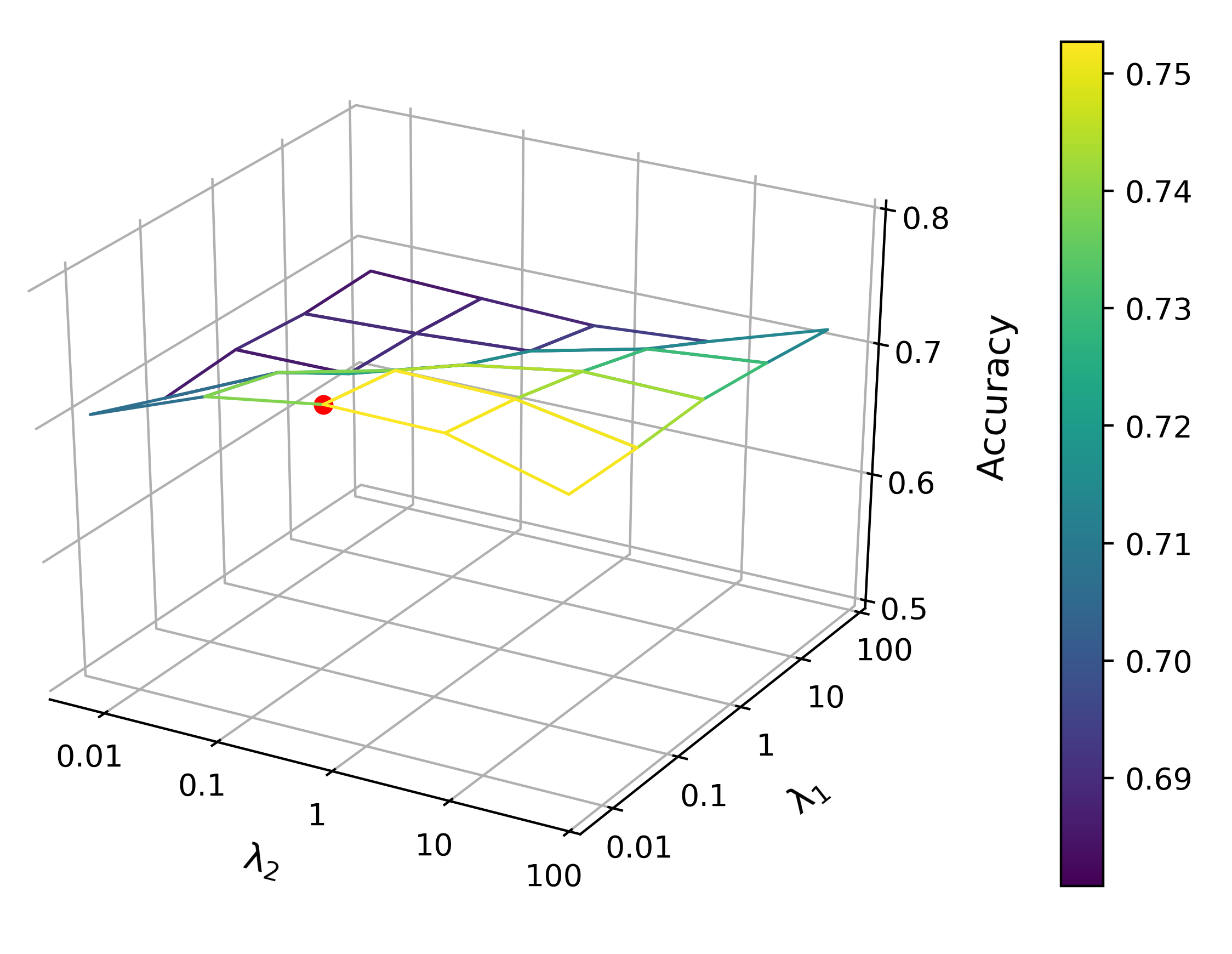}
\label{fig:123}
}
\caption{Accuracy of GCN-DVD with different $\lambda_1$ and $\lambda_2$ on different biased Pubmed datasets.}
\label{fig:para_pubmed}
\end{figure*}

\subsection{Parameter sensitivity}
We study the sensitiveness of parameters and report the results of GCN-DVD on three citation networks in Fig.~\ref{fig:para}-\ref{fig:para_pubmed}. The experimental results show that GCN-DVD is relatively stable to $\lambda_1$ and $\lambda_2$ with wide ranges in most cases, indicating the robustness of our model.

\subsection{Training time per epoch}
We report the results for the mean training time of GCN and GCN-DVD per epoch (forward pass, cross-entropy calculation, backward pass) for 200 epochs on Cora, Citeseer and Pubmed datasets, measured in seconds wall-clock time, in Table~\ref{tab::time}. These methods are performed on a RTX 3090 GPU Card. As we can see, the training time of GCN-DVD term has the same order of magnitude with GCN. More importantly, the training time of the DVD term will not be influenced by the base model we choose, i.e., when the base model is GAT, the running time of DVD term will not change.

\begin{table}[ht]
  \caption{The training time per epoch.}
  \label{sample-table}
  \centering{
  \begin{tabular}{llll}
    \cr\hline
       & Cora & Citeseer & Pubmed 
    \cr\hline
    GCN       & $1.29\times 10^{-2}$ & $2.00\times 10^{-2}$ & $1.11\times 10^{-1}$
    \cr
    GCN-DVD      &$6.19\times 10^{-2}$   & $8.46\times 10^{-2}$
 & $2.42\times 10^{-1}$
    \cr\hline
  \end{tabular}}
  \label{tab::time}
\end{table}
\section{Related Works}
\label{sec::related work}
\par In the past few years, Graph Neural Networks (GNNs)~\cite{scarselli2008graph,kipf2016semi,velivckovic2017graph,klicpera_predict_2019,xu2018how,jin2021bite,fan2019metapath,fan2020one2multi,bai2020learning} have become the major technology to capture patterns encoded in the graph due to its powerful representation capacity. Recently, KPGNN~\cite{cao2021knowledge} applies GNNs to the social event detection task by preserving the incremental knowledge emerging in social data. MRFasGCN~\cite{jin2019graph} integrates GCN with a Markov Random Fields (MRF) model to deal with the semi-supervised community detection problem. Not only pursuing the performance of GNNs on clean data, \cite{lin2020exploratory} proposes an exploratory adversarial attack method, called EpoAtk, to test whether existing GNNs are robust with adversarial perturbations on graphs. Although the current GNNs have achieved great success, when applied to the inductive setting, they all assume that training nodes and test nodes follow the same distribution. However, this assumption does not always hold in real applications. GNM~\cite{zhou2019graph} first pays attention to the label selection problem on graph learning, and it learns an IPW estimator to estimate the probability of each node to be selected and uses this probability to reweight the labeled nodes. However, it heavily relies on the accuracy of the IPW estimator, which depends on the label assignment distribution of the whole graph, hence it is more suitable for the transductive setting.
\par To enhance the stability in unseen varied distributions, some literatures~\cite{kuang2020stable,shen2020sample} have revealed the connection between correlation and prediction stability under model misspecification. Moreover, a kind of literatures~\cite{ma2016decorrelation,rodriguez2016regularizing,zhang2018removing} have studied the problem of removing the features correlation effect in neural networks, which brings great benefits for deep neural networks. However, these methods are built on simple regressions or regular neural networks such as CNNs, but GNNs have more complex architectures and properties needed to be considered. We also notice that \cite{shen2020stable} propose a differentiated variable decorrelation term for linear regression. However, this decorrelation term requires multiple environments with different correlations between stable variables and unstable variables available in the training stage while our method does not require this prior knowledge.

\section{Conclusion}
In this paper, we investigate a general and practical problem: learning GNNs with agnostic label selection bias. The selection bias will inevitably cause the GNNs to learn the biased correlation between aggregation mode and class label and make the prediction unstable. We propose a novel debiased GNN framework, which combines the decorrelation technique with GNNs in a unified framework. Extensive experiments well demonstrate the effectiveness and flexibility of DGNN.

\section*{Acknowledgment}
This work is partially supported by the National Natural Science Foundation of China (No. U20B2045, 62192784, 62172052, 61772082, 62002029) and the Fundamental Research Funds for the Central Universities (No. 2021RC28). Kun Kuang is supported in part by National Natural Science Foundation of China (No. 62006207), Young Elite Scientists Sponsorship Program by CAST and Zhejiang Province Natural Science Foundation (No. LQ21F020020). Shaohua Fan is supported by BUPT Excellent Ph.D. Students Foundation (No. CX2021311) and China Scholarship Council. We thank Dr. Tianchi Yang for discussion on the proof of Theorem 3.


%

\appendices
\section{Proof of Theorem 2}
\label{appendix::Theorem 2}
\begin{equation}
\begin{aligned}
\hat{\mathbf{w}}&=\arg\min_\mathbf{w}\sum^p_{j=1}(\alpha^\mathrm{T} \text{abs}(\mathbf{H}_{.j}^\mathrm{T}\Lambda_\mathbf{w}\mathbf{H}_{.-j}/n\\
&-\mathbf{H}_{.j}^\mathrm{T} \mathbf{w}/n\cdot\mathbf{H}_{.-j}^\mathrm{T} \mathbf{w}/n))^2+ \frac{\lambda_1}{n}\sum_{i=1}^{n}\mathbf{w}_i^2 + \lambda_2 (\frac{1}{n}\sum_{i=1}^n\mathbf{w}_i-1)^2
    \label{equ:1}
\end{aligned}
\end{equation}
\begin{proof} For simplicity, we denote $\mathcal{L}_1=\sum^p_{j=1}(\alpha^\mathrm{T}\cdot \text{abs}(\mathbf{H}_{.j}^\mathrm{T}\Lambda_\mathbf{w}\mathbf{H}_{.-j}/n-\mathbf{H}_{.j}^\mathrm{T} \mathbf{w}/n\cdot\mathbf{H}_{.-j}^\mathrm{T} \mathbf{w}/n))^2$,
$\mathcal{L}_2=\frac{1}{n}\sum_{i=1}^{n}\mathbf{w}_i^2$, $\mathcal{L}_3=(\frac{1}{n}\sum_{i=1}^n\mathbf{w}_i-1)^2$ and $\mathcal{F}(\mathbf{w})=\mathcal{L}_1 + \lambda_1\mathcal{L}_1+\lambda_2\mathcal{L}_2$.
We first calculate the Hessian matrix of $\mathcal{F}(\mathbf{w})$, denoted as $\mathbf{H}_e$, to prove the uniqueness of the optimal solution $\hat{\mathbf{w}}$, as follows:
\begin{equation}
    \mathbf{H}_e=\frac{\partial^2\mathcal{L}_1}{\partial\mathbf{w}^2}+
    \lambda_1\frac{\partial^2\mathcal{L}_2}{\partial\mathbf{w}^2}+
    \lambda_2\frac{\partial^2\mathcal{L}_3}{\partial\mathbf{w}^2}
    \nonumber
\end{equation}
For the term $\mathcal{L}_1$, we can rewrite it as:
\begin{equation}
    \begin{aligned}
\mathcal{L}_1&=\sum_{j\neq k}\alpha_i^2\alpha_k^2(\frac{1}{n}\sum_{i=1}^n\mathbf{H}_{i,j}\mathbf{H}_{i,k}\mathbf{w}_i\\
&-(\frac{1}{n}\sum_{i=1}^n\mathbf{H}_{i,j}\mathbf{w}_i)(\frac{1}{n}\sum_{i=1}^n\mathbf{H}_{i,k}\mathbf{w}_i))^2\\
&=\sum_{j\neq k}\alpha_i^2\alpha_k^2((\frac{1}{n}\sum_{i=1}^n\mathbf{H}_{i,j}\mathbf{H}_{i,k}\mathbf{w}_i)^2\\
&-(\frac{2}{n}\sum_{i=1}^n\mathbf{H}_{i,j}\mathbf{H}_{i,k}\mathbf{w}_i)(\frac{1}{n}\sum_{i=1}^n\mathbf{H}_{i,j}\mathbf{w}_i)(\frac{1}{n}\sum_{i=1}^n\mathbf{H}_{i,k}\mathbf{w}_i)\\
&+ ((\frac{1}{n}\sum_{i=1}^n\mathbf{H}_{i,j}\mathbf{w}_i)(\frac{1}{n}\sum_{i=1}^n\mathbf{H}_{i,k}\mathbf{w}_i))^2)
\end{aligned}
\nonumber
\end{equation}
And when $|\mathbf{H}_{i,j}|\leq c$, for any variable $j$ and $k$, and $|\mathbf{w}_{i}|\leq c$, we have $\frac{\partial^2}{\partial\mathbf{w}^2}(\frac{1}{n}\sum_{i=1}^n\mathbf{H}_{i,j}\mathbf{H}_{i,k}\mathbf{w}_i)^2=\mathcal{O}(\frac{1}{n^2})$, $\frac{\partial^2}{\partial\mathbf{w}^2}(\frac{1}{n}\sum_{i=1}^n\mathbf{H}_{i,j}\mathbf{w}_i)(\frac{1}{n}\sum_{i=1}^n\mathbf{H}_{i,k}\mathbf{w}_i)=\mathcal{O}(\frac{1}{n^2})$ and $\frac{\partial^2}{\partial\mathbf{w}^2}((\frac{2}{n}\sum_{i=1}^n\mathbf{H}_{i,j}\mathbf{H}_{i,k}\mathbf{w}_i)(\frac{1}{n}\sum_{i=1}^n\mathbf{H}_{i,j}\mathbf{w}_i)(\frac{1}{n}\sum_{i=1}^n\mathbf{H}_{i,k}\mathbf{w}_i))=\mathcal{O}(\frac{1}{n^2})$. Then with $|\alpha_i|\leq c$, we have $\alpha_i^2\alpha_k^2\frac{\partial^2}{\partial\mathbf{w}^2}(\frac{1}{n}\sum_{i=1}^n\mathbf{H}_{i,j}\mathbf{H}_{i,k}\mathbf{w}_i-(\frac{1}{n}\sum_{i=1}^n\mathbf{H}_{i,j}\mathbf{w}_i)(\frac{1}{n}\sum_{i=1}^n\mathbf{H}_{i,k}\mathbf{w}_i))^2=\mathcal{O}(\frac{1}{n^2})$. $\mathcal{L}_1$ is sum of $p(p-1)$ such terms. Then we have
\begin{equation}
    \frac{\partial^2\mathcal{L}_1}{\partial\mathbf{w}^2}=\mathcal{O}(\frac{p^2}{n^2}).
    \nonumber
\end{equation}

With some algebras, we can also have
\begin{equation}
    \frac{\partial^2\mathcal{L}_2}{\partial\mathbf{w}^2}=\frac{1}{n}\mathbf{I}, \frac{\partial^2\mathcal{L}_3}{\partial\mathbf{w}^2}=\frac{1}{n^2}\mathbf{1}\mathbf{1}^\mathrm{T},
    \nonumber
\end{equation}

thus,
\begin{equation}
    \mathbf{H}_e=\mathcal{O}(\frac{p^2}{n^2})+\frac{\lambda_1}{n}\mathbf{I} + \frac{\lambda_2}{n^2}\mathbf{1}\mathbf{1}^\mathrm{T}=\frac{\lambda_1}{n}\mathbf{I} + \mathcal{O}(\frac{p^2+\lambda_2}{n^2}).
    \nonumber
\end{equation}
Therefore, if $\frac{\lambda_1}{n}\gg\frac{p^2+\lambda_2}{n^2}$, equivalent to $\lambda_1n\gg p^2+\lambda_2$, $\mathbf{H}_e$ is an almost diagonal matrix. Hence, $\mathbf{H}_e$ is positive definite~\cite{nakatsukasa2010absolute}. Then the function $\mathcal{F}(\mathbf{w})$ is convex on $\mathcal{C}=\{\mathbf{w}:|\mathbf{w}_i|\leq c\}$, and has unique optimal solution $\hat{\mathbf{w}}$.

\par Moreover, because $\mathcal{L}_1$ is our major decorrelation term, we hope $\mathcal{L}_1$ to dominate the terms $\lambda_1\mathcal{L}_2$ and  $\lambda_2\mathcal{L}_3$. On $\mathcal{C}$, we have $\mathcal{L}_1=\mathcal{O}(1)$, $\mathcal{L}_2=\mathcal{O}(1)$, and
$\alpha_i^2\alpha_k^2(\frac{1}{n}\sum_{i=1}^n\mathbf{H}_{i,j}\mathbf{H}_{i,k}\mathbf{w}_i-(\frac{1}{n}\sum_{i=1}^n\mathbf{H}_{i,j}\mathbf{w}_i)(\frac{1}{n}\sum_{i=1}^n\mathbf{H}_{i,k}\mathbf{w}_i))^2=\mathcal{O}(1)$. Thus $\mathcal{L}_1=\mathcal{O}(p^2)$. When $p^2\gg\max(\lambda_1, \lambda_2)$, $\mathcal{L}_1$ will dominate the regularization terms $\mathcal{L}_2$ and $\mathcal{L}_3$.
\end{proof}

\label{appendex::datasets}




\ifCLASSOPTIONcaptionsoff
  \newpage
\fi



%
\bibliographystyle{IEEEtran}
\bibliography{ref}

%

\begin{IEEEbiography}[{\includegraphics[width=1in,height=1.25in,clip,keepaspectratio]{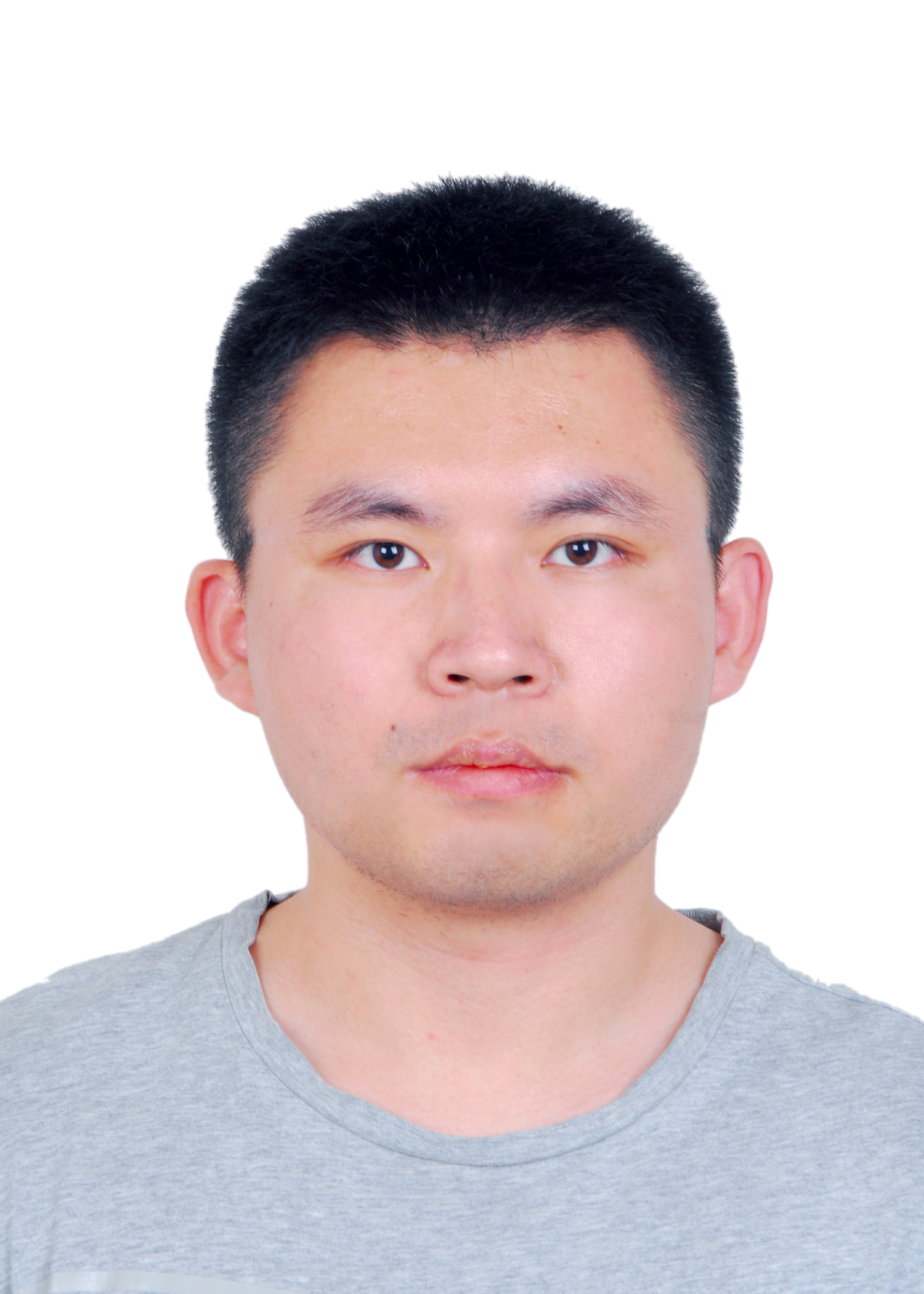}}]{Shaohua Fan}
received the B.E. degree in 2015 from Northeastern University and M.S. degree in 2018 from Beijing University of Posts and Telecommunications. He is a fourth-year
Ph.D. student in the Department of Computer
Science of Beijing University of Posts and Telecommunications and currently works as a visiting student at MILA. His main research interests including graph mining and causal machine learning. He has published several papers in major international conferences, including KDD, WWW, IJCAI, and CIKM etc.

\end{IEEEbiography}

\begin{IEEEbiography}[{\includegraphics[width=1in,height=1.25in,clip,keepaspectratio]{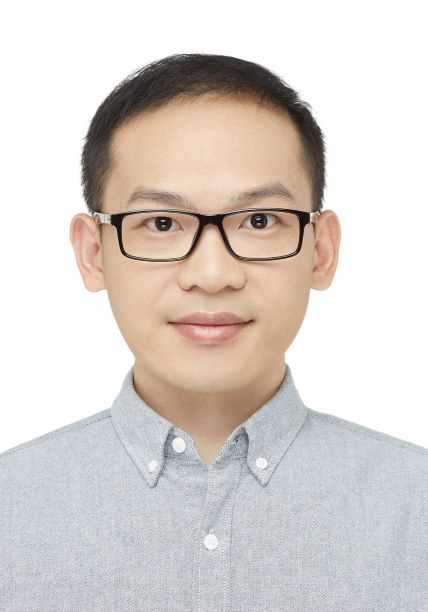}}]{Xiao Wang}
is an Associate Professor in the School of Computer Science, Beijing University of Posts and Telecommunications. He received his Ph.D. degree from the School of Computer Science and Technology, Tianjin University, Tianjin, China, in 2016. He was a postdoctoral researcher in Department of Computer Science and Technology, Tsinghua University, Beijing, China. His current research interests include data mining, social network analysis, and machine learning. Until now, he has published
more than 70 papers in refereed journals and conferences.
\end{IEEEbiography}


\begin{IEEEbiography}[{\includegraphics[width=1in,height=1.25in,clip,keepaspectratio]{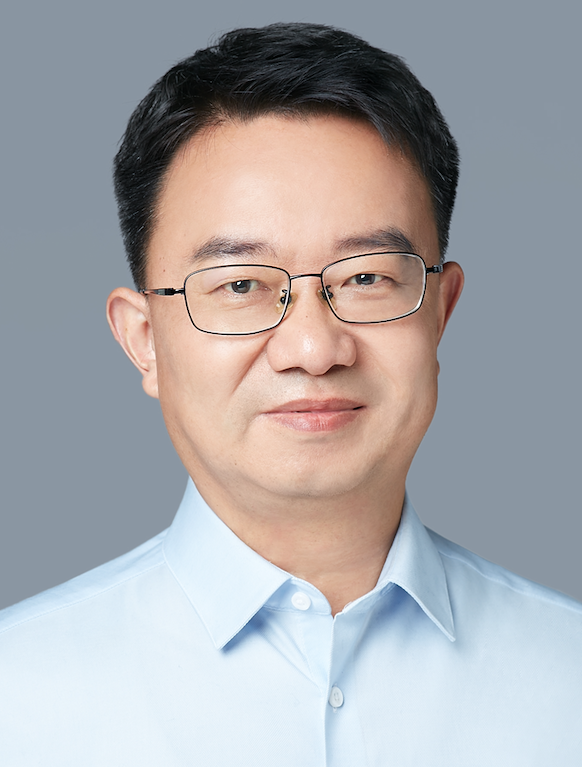}}]{Chuan Shi}
received the B.S. degree from the Jilin University in 2001, the M.S. degree from
the Wuhan University in 2004, and Ph.D. degree from the ICT of Chinese Academic of Sciences
in 2007. He is a professor and deputy director of Beijing Key Lab of Intelligent Telecommunications Software and Multimedia at present. His research interests are in data mining, machine learning, and evolutionary computing. He has published more than 100 papers in refereed journals and conferences.
\end{IEEEbiography}

\begin{IEEEbiography}[{\includegraphics[width=1in,height=1.25in,clip,keepaspectratio]{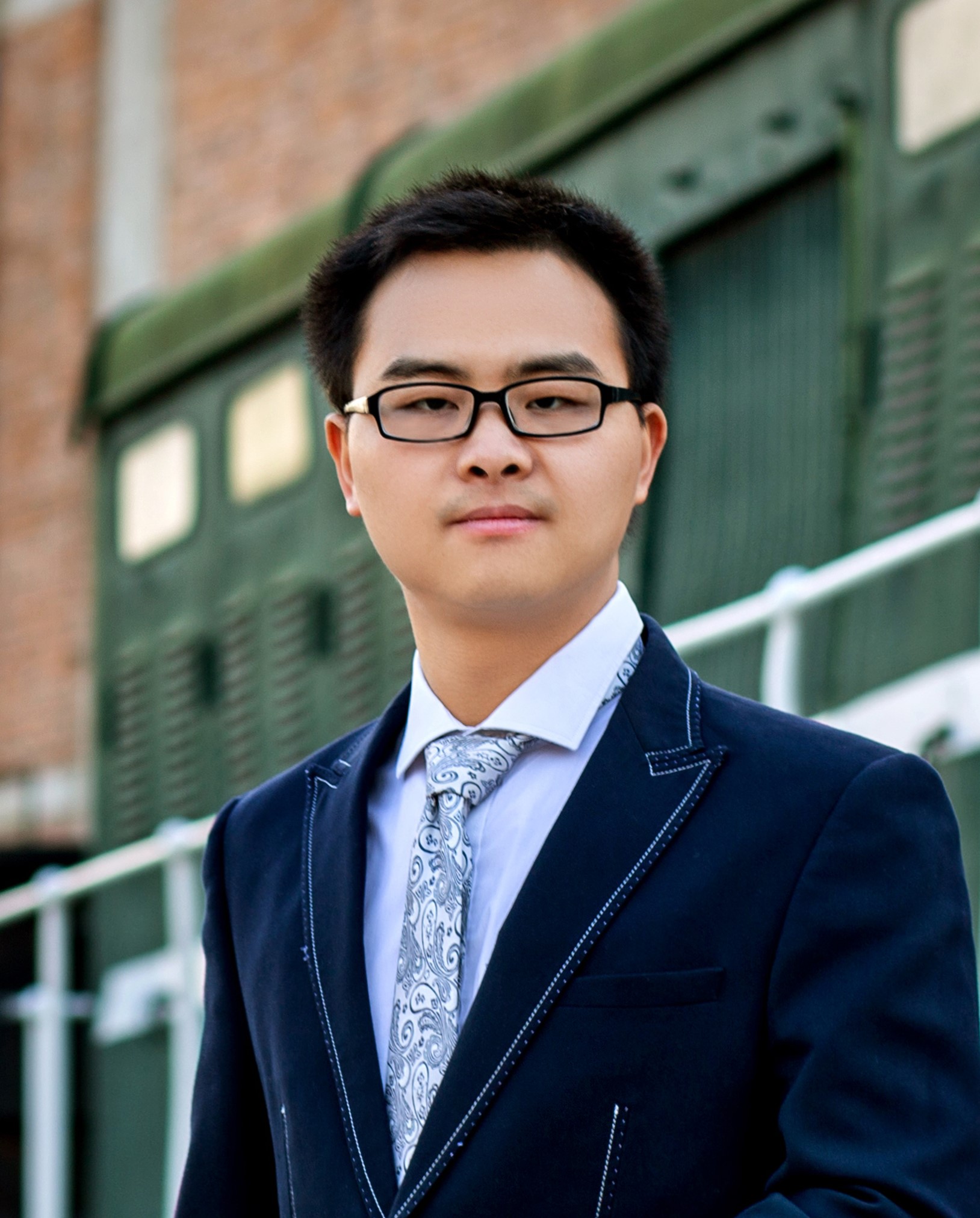}}]{Kun Kuang}
received his Ph.D. degree from Tsinghua University in 2019. He is now an Associate Professor in the College of Computer Science and Technology, Zhejiang University. He was a visiting scholar with Prof. Susan Athey's Group at Stanford University. His main research interests include Causal Inference, Artificial Intelligence, and Causally Regularized Machine Learning.  He has published over 40 papers in major international journals and conferences, including SIGKDD, ICML, ACM MM, AAAI, IJCAI, TKDE, TKDD, Engineering, and ICDM, etc.
\end{IEEEbiography}

\begin{IEEEbiography}[{\includegraphics[width=1in,height=1.25in,clip,keepaspectratio]{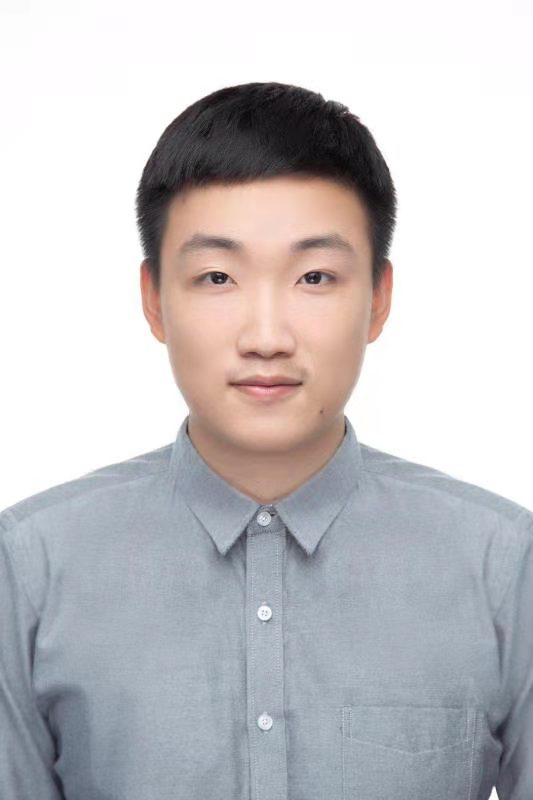}}]{Nian Liu}
received the B.E. degree in 2020 from Beijing University of Posts and Telecommunications. He is a second-year M.S. student in the Department of Computer Science of Beijing University of Posts and Telecommunications. His main research interests including graph mining and contrastive learning.
\end{IEEEbiography}

\begin{IEEEbiography}[{\includegraphics[width=1in,height=1.25in,clip,keepaspectratio]{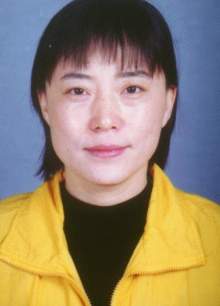}}]{Bai Wang}
received the B.S. degree from the Xian Jiaotong University, Xian, China and Ph.D. degree from the Beijing University of Posts and Telecommunications, Beijing, China. And she is currently a professor of computer science in BUPT. She was the director of Beijing Key Lab of Intelligent Telecommunications Software and Multimedia.
\end{IEEEbiography}




\end{document}